\documentclass{article} 
\usepackage[preprint]{neurips_2019}
\usepackage{times}


\usepackage{amsmath,amsfonts,bm}









\def\eqref#1{equation~\ref{#1}}









\def\1{\bm{1}}










\DeclareMathAlphabet{\mathsfit}{\encodingdefault}{\sfdefault}{m}{sl}
\SetMathAlphabet{\mathsfit}{bold}{\encodingdefault}{\sfdefault}{bx}{n}













\DeclareMathOperator{\sign}{sign}

\usepackage[bookmarks=false]{hyperref}
\usepackage{url}

\usepackage{amsmath}
\usepackage{amssymb}
\usepackage{amsfonts}
\usepackage{hyperref}
\usepackage{algpseudocode}
\usepackage{algorithm}

\usepackage{subfigure}
\usepackage{graphicx}
\usepackage{graphicx}
\usepackage{multicol}
\usepackage{enumitem}
\usepackage{booktabs} 
\usepackage{amsthm}
\usepackage{todonotes}
\usepackage{xcolor}
\newcommand{\rpm}{\sbox0{$1$}\sbox2{$\scriptstyle\pm$}
  \raise\dimexpr(\ht0-\ht2)/2\relax\box2 }
\usepackage{multirow}
\usepackage{enumitem}
\usepackage{microtype}
\usepackage{booktabs} 

\newcommand{\eat}[1]{}
\newtheorem{theorem}{Theorem}
\newtheorem{proposition}{Proposition}

\newtheorem{lemma}[theorem]{Lemma}

\title{Differential Equation Units: Learning Functional Forms of Activation Functions from Data}


\author{MohamadAli Torkamani \\
Amazon Web Services \\
\texttt{alitor@amazon.com} \\
\And
Shiv Shankar \\
University of Massachusetts Amherst \\
\texttt{sshankar@cs.umass.edu}\\
\And 
Amirmohammad Rooshenas \\
University of Massachusetts Amherst \\
\texttt{pedram@cs.umass.edu} \\
\And
Phillip Wallis \\
Microsoft Inc. \\
\textit{phwallis@microsoft.com}
}


%

\begin{document}

\maketitle

\begin{abstract}
Most deep neural networks use simple, fixed activation functions, such
as sigmoids or rectified linear units, regardless of domain or
network structure. We introduce differential equation units (DEUs), an
improvement to modern neural networks, which enables each neuron to learn a particular nonlinear activation function from a family of solutions to an ordinary differential equation. Specifically, each neuron may change its functional form during training based on the behavior of the other parts of the network.
We show that using neurons with DEU activation functions results in a more compact network capable of achieving comparable, if not superior, performance when is compared to much larger
networks.

\end{abstract}

\section{Introduction}
\label{intro}
Driven in large part by advancements in storage, processing, and parallel computing, deep neural networks (DNNs) have become capable of outperforming other methods across a wide range of highly complex tasks. 

The advent of new activation functions such as rectified linear units (ReLU) \cite{nair2010rectified}, exponential linear units (ELU) \cite{clevert2015fast}, and scaled exponential linear units (SELU) \cite{klambauer2017self} address a network's ability to effectively learn complicated functions, thereby allowing them to perform better on complicated tasks. The choice of an activation function is typically determined empirically by tuning, or due to necessity. For example, in modern deep networks, ReLU activation functions are often favored over  sigmoid functions, which used to be a popular choice in the earlier days of neural networks. A reason for this preference is that the ReLU function is non-saturating and does not have the vanishing gradient problem when used in deep structures \cite{hochreiter1998vanishing}.

In all of the aforementioned activation functions, the functional form of the activation function is fixed. However, depending on data, different forms of activation functions may be more suitable to describe data. 
In this paper, we introduce differential equation units (DEUs), where the activation function of each neuron is the nonlinear, possibly periodic solution of a parameterized second order, linear, ordinary differential equation. 
By learning the parameters of the differential equations using gradient descent on the loss function, the solutions to the differential equations change their functional form, thus adapting their forms to extract more complex features from data.

From an applicability perspective, our approach is similar to Maxout networks \cite{goodfellow2013maxout} and adaptive piece-wise linear units (PLUs)~\cite{agostinelli2014learning,ramachandran2017searching}. 

Figure~\ref{fig:region} shows nested noisy circles for which we trained neural networks with Maxout, ReLU, and our proposed DEU activation functions. For using ReLU, we needed at least two layers of four hidden units to separate the circles, while one layer of either two Maxout or two DEUs is enough to learn an appropriate decision boundary.
While, the number of parameters learned by Maxout and PLU is proportional to the number of input weights to a neuron, and the number of linear units in that neuron, for each DEU, we learn {\it only five additional} parameters that give us highly flexible activation functions in return; altogether, resulting in more compact representations.
Moreover, DEUs can represent a broad range of functions including harmonic functions, while Maxout is limited to piecewise approximation of convex functions (see Figure~\ref{complex_fcn}).

DEUs can also transform their forms during training in response to the behavior of other neurons in the network in order to describe data with interaction to the other part of the network, thus the neurons in a neural network may adopt different forms as their activation functions. The variety of activation function forms throughout a network enables it to encode more information, thus requiring less neurons for achieving the same performance comparing to the networks with fixed activation functions.

The main contribution of this paper is to explore learning the form of activation functions through introducing differential equation units (DEUs). We also propose a learning algorithm to train the parameters of DEUs, and we empirically show that neural networks with DEUs can achieve high performance with more compact representations and are effective for solving real-world problems. 









\section{Differential Equation Units}

Inspired by functional analysis and calculus of variations, instead of using a fixed activation function for each layer in a network, we propose a novel solution for learning an activation function for each neuron. Experimentally, we show that by allowing each neuron to learn its own activation function, the network as a whole can perform on par with (or even outperform) much larger baseline networks. In the supplementary material, we have briefly outlined how in Calculus of Variations, the Euler-Lagrange formula can be used to derive a differential equation from an optimization problem with function variables~\cite{gel1963variatsionnoeischislenie,gelfand2000calculus}. 

\begin{figure}[t]
    \centering
    \includegraphics[width=0.8\columnwidth]{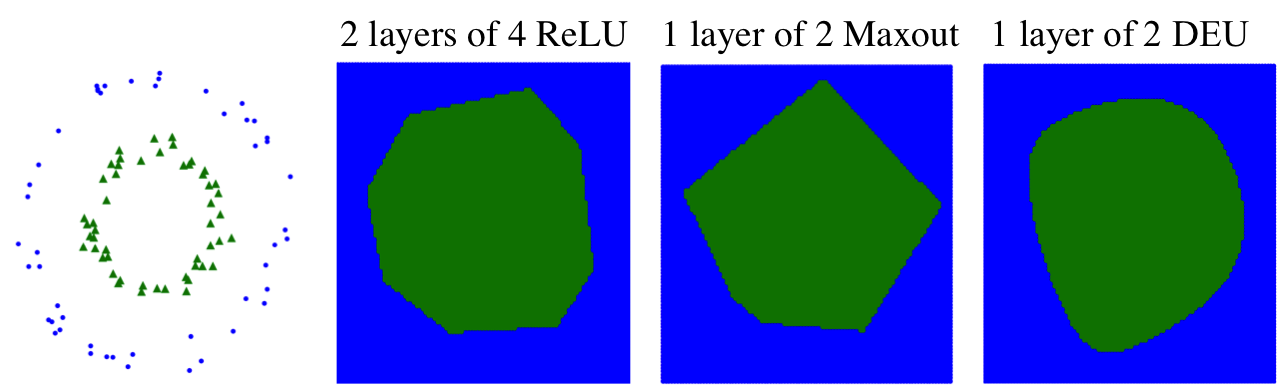}
    \caption{Learned decision boundaries by neural networks with ReLU, Maxout, and DEU activation functions for classifying noisy circles.}
    \label{fig:region}
\end{figure}

 \begin{figure*}[t]
\centering
\subfigure{\includegraphics[width=0.30\linewidth]{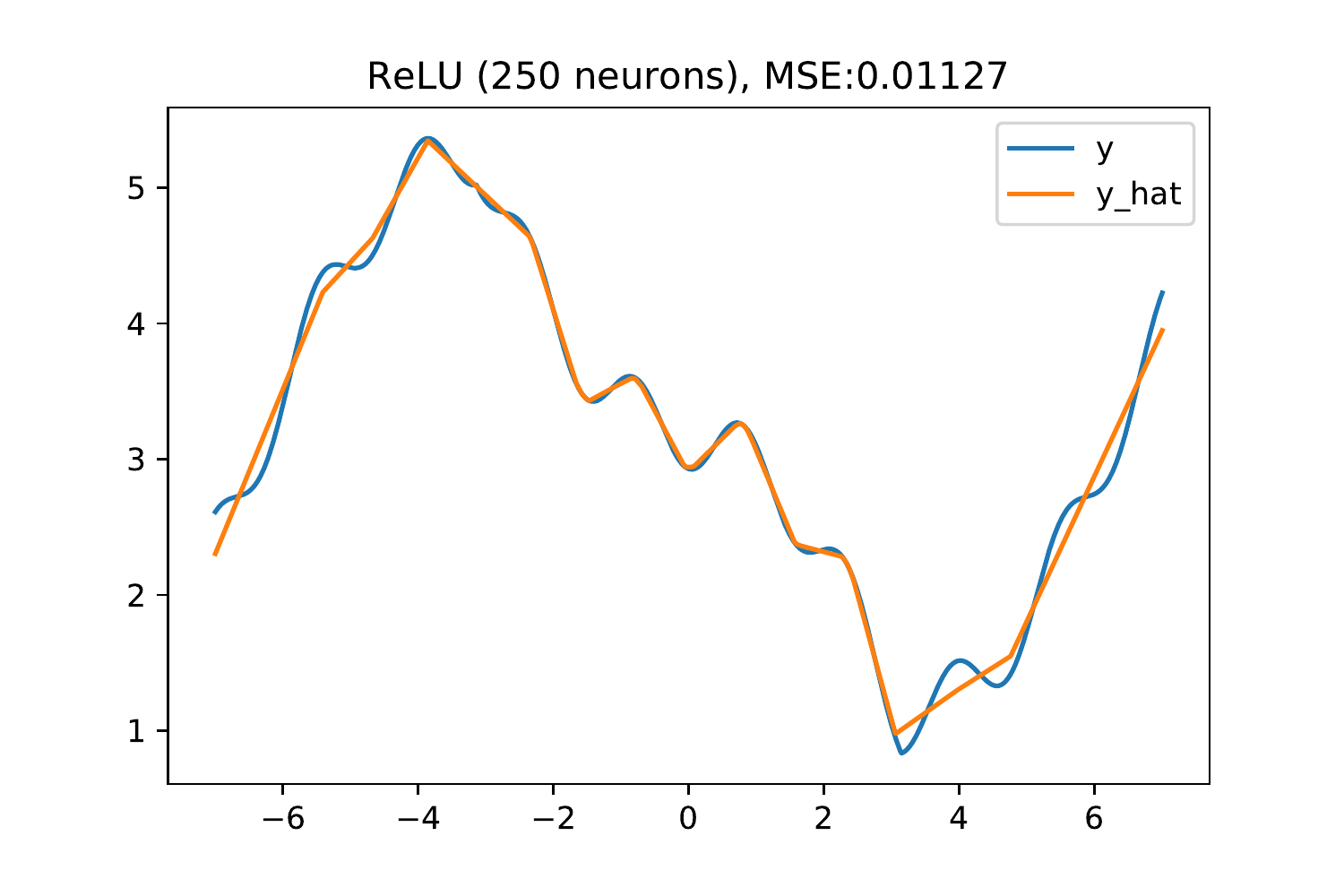}}
\subfigure{\includegraphics[width=0.30\linewidth]{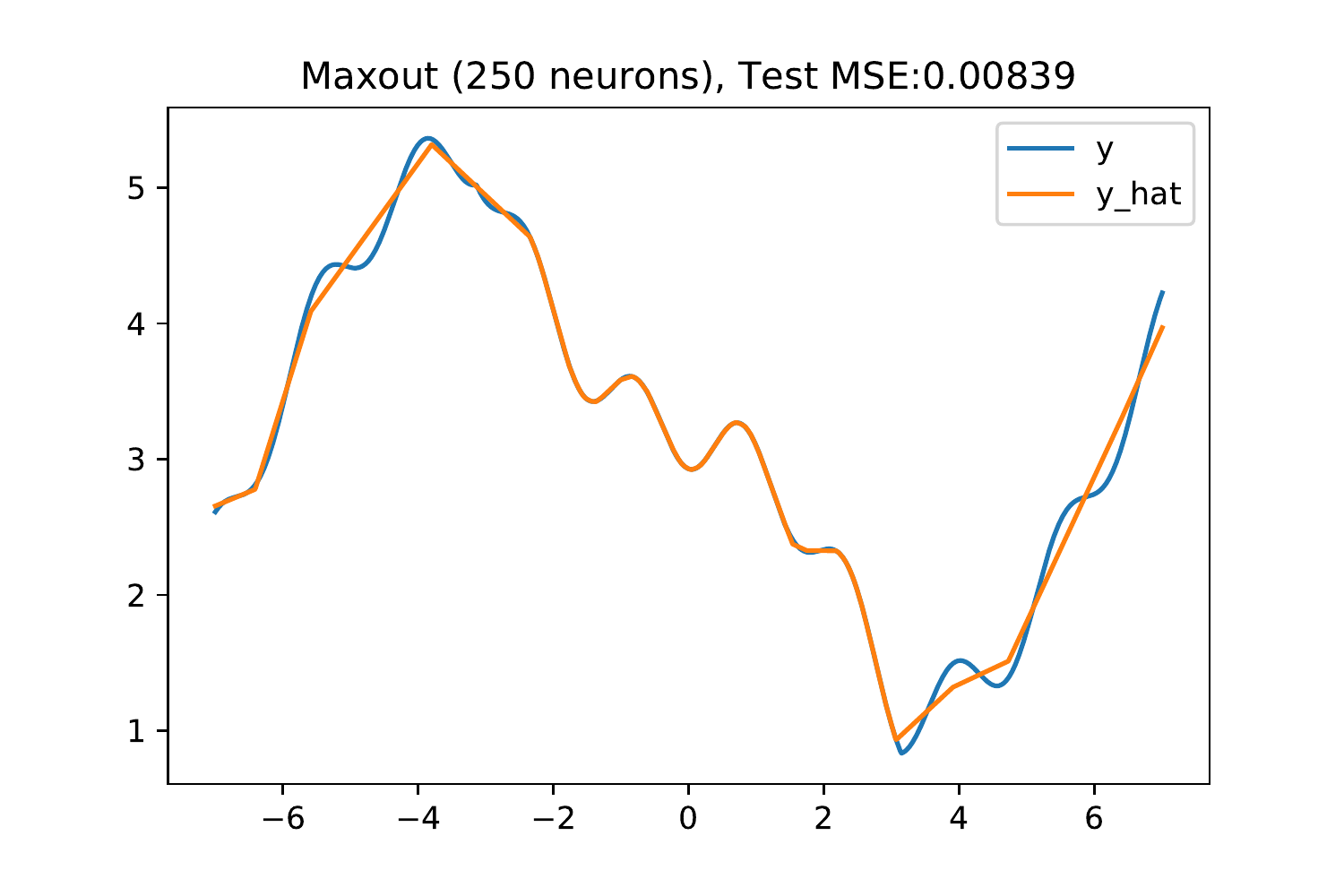}}
\subfigure{\includegraphics[width=0.30\linewidth]{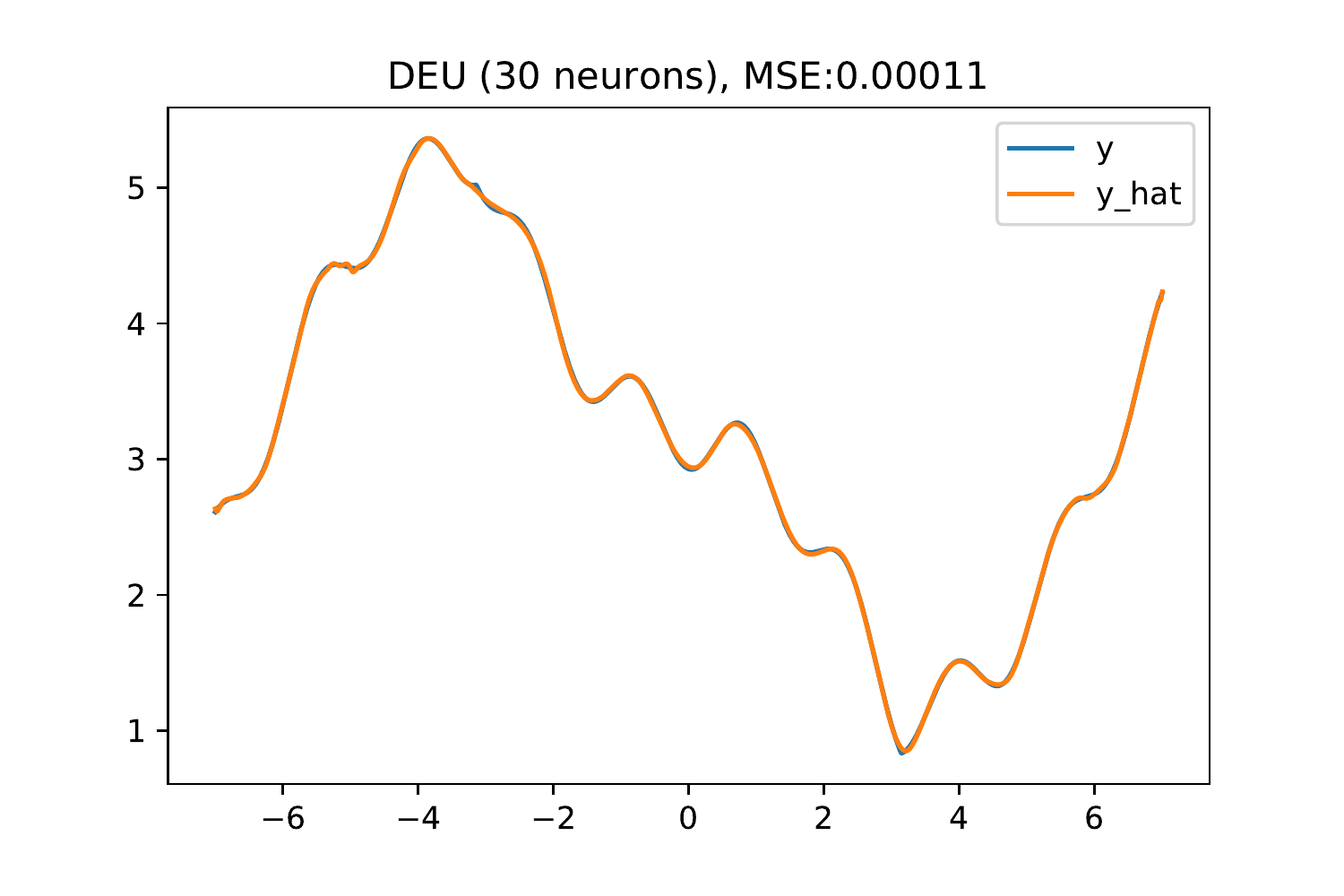}}
\caption{The problem of fitting a challenging function ($y = (\sin(t)-\cos(2t)^2)/2+ 4*(1+\arccos(\sin(t/2)))/3$). A neural network with 25 DEUs in one hidden layer fits the function much more accurately than neural networks of 250 neurons with ReLU and Maxout activation function.
\label{complex_fcn}
}
\end{figure*}


The main idea behind DEUs is to find the parameters of an ordinary differential equation (ODE) for each neuron in the network, whose solution would be used as the activation function of the corresponding neuron. As a result, each neuron learns a personalized activation function. We select (learn) the parameters of the differential equation from a low dimensional space (i.e. five dimensions). By minimizing the networks loss function, our learning algorithm smoothly updates the parameters of the ODE, which results in an uncountably\footnote{Up to computational precision limitations.} extensive range of possible activation functions.

We parameterize the activation function of each neuron using a linear, second order ordinary differential equation $a y''(t) + b y'(t) + c y(t) = g(t)$, parameterized by five coefficients ($a$, $b$, $c$, $c_1$, $c_2$), where $a$, $b$, and $c$ are scalars used to parameterize the ODE, $c_1$ and $c_2$ represent the initial conditions of the ODE's solution, and $g(t)$ is a regulatory function that we call the \textit{core activation function}. The coefficients are the only additional parameters that we learn for each neuron and are trained through the backpropagation algorithm.  To simplify the math and because it is a standard practice in control theory, we have set $g(t)=u(t)$, where $u(t)$ is the Heaviside (unit) step function. 
Therefore, the ODE that we have chosen has the following form: 
\begin{align}\label{diffeq_activation}
a y''(t) + b y'(t) + c y(t) = u(t), \nonumber\\
\text{~~ where } u(t)=
\begin{cases}
 0 & x\leq 0 \\
 1 & x>0\\
\end{cases}
\end{align}



This model is often used to denote the exchange of energy between mass and stiffness elements in a mechanical system or between capacitors and inductors in an electrical system~\cite{ogata2002modern}. Interestingly, by using the solutions of this formulation as activation functions, we can gain a few key properties: approximation or reduction to some of the standard activation functions such as sigmoid or ReLU; the ability to capture oscillatory forms; and, exponential decay or growth.

\subsection{The learning algorithm}
For fixed $a$, $b$ and $c$, the solution of the differential equation
is: 
\begin{align}
 y(t) = f(t;a,b,c) + c_1 f_1(t;a,b,c)+ c_2 f_2(t;a,b,c),   
\end{align}
for some functions $f$, $f_1$, $f_2$. Here, $y(t)$ lies on an affine space parameterized by scalars $c_1$ and $c_2$ that represent the initial conditions of the solution (for desired values of $y_0$ and $y'_0$ at some $t=t_0$ such that $y(t_0) = y_0$ and $\frac{\partial y(t)}{\partial t}\rvert_{t=t_0} = y'_0$).

\subsubsection{Closed-form solutions}
First, we solve the differential equations parametrically and take derivatives of the closed-form solutions: $\frac{\partial y}{\partial t}$ with respect to its input $t$, and $\frac{\partial y}{\partial a}$, $\frac{\partial y}{\partial b}$, and $\frac{\partial y}{\partial c}$ with respect to parameter $a$, $b$, and $c$. Moreover, the derivatives with respect to $c_1$ and $c_2$ are $f_1$ and $f_2$, respectively. This is done once. We have solved the equations and taken their derivatives using the software package \citeauthor{maplesoft}~(\citeyear{maplesoft}). Maple also generates optimized code for the solutions, by breaking down equations in order to reuse computations.\footnote{The code is available at \url{https://github.com/rooshenas/deu}} Although we used Maple here, this task could have been done by pen and paper.\\

\subsubsection{Training the parameters}
The parameters of the activation function (the ODE parameters and the appropriate initial conditions of its solution) are jointly trained with the  neural networks' parameters using back-propagation.\\
We adopt regular backpropagation to update the values of parameters $a$, $b$, $c$, $c_1$ and $c_2$ for each neuron, along with using $\frac{\partial y}{\partial t}$ for updating network parameters $w$ (input weights to the neuron), and for backpropagating the error to the lower layers.

We initialize network parameters using current best practices with respect to the layer type (e.g. linear layer, convolutional layer, etc.). We initialize parameters  $a$, $b$, $c$ for all neurons with a random positive number less than one, and strictly greater than zero. We initialize $c_1=c_2=0.0$. To learn the parameters $\mathbf{\theta} = [a, b, c, c_1, c_2]^T$ along with
the weights $\mathbf{w}$ on input values to each neuron, we deploy a
gradient descent algorithm. Both the weights $\mathbf{w}$, as well as $\mathbf{\theta}$ are learned using
the conventional backpropagation algorithm with Adam updates \cite{kingma2014adam}. 
During training, we treat $a, b, c, c_1$ and $c_2$ like biases to the neuron (i.e., with input weight of $1.0$)
and update their values based on the direction of the corresponding
gradients in each mini-batch.

\begin{table*}[!tbh]
\caption{Subspace and Singularities}
\label{singularities}
\centering
\begin{tabular}{ll}
 \toprule
Subspace & Solution\\
 \midrule
 $a=0,b=0,c\neq 0$ & $\sigma(t)/c \quad \text{(i.e.,~sigmoid when } c=1)$\\
 $a=0,b\neq 0,c=0$ & $x u(x)/b + c_1 \quad \text{(i.e.,~ReLU when } b=1 \text{~and~}c_1=0)$ \\
 $a=0,b\neq 0,c\neq 0$ & $c_1 e^{-(c x)/b} - u(x) e^{-(c x)/b}/c + u(x)/c$\\
 $a\neq 0,b=0,c= 0$ & $\frac{x^2 u(x)}{2 a} + c_2 x + c_1$ \\
 $a<0,b=0,c<0  \text{~or~} $ &\multirow{2}{*}{ $c_2\sin( \sqrt{\frac {c}{a}}x) +c_1 \cos(\sqrt{\frac {c}{a}}x) -\frac {u(x) }{c} \left( \cos(\sqrt{\frac {c}{a}}x) -1 \right)$} \\
 $a>0,b=0,c>0$&\\
 
 $a<0,b=0,c>0 \text{~or~}$& \multirow{2}{*}{$c_1 e^{\sqrt{-\frac{c}{a}x}} + c_2 e^{-\sqrt{-\frac{c}{a}x}} +\frac{u(x) e^{-\sqrt{-\frac{c}{a}x} }}{2 c} + \frac{u(x) e^{\sqrt{-\frac{c}{a}x} }}{2 c} - \frac{u(x)}{c}$}\\
  $a>0,b=0,c<0$& \\
  
 $a\neq 0,b\neq 0,c =0$ & ${\frac {u \left( x \right) a}{{b}^{2}}{{\rm e}^{-{\frac {bx}{a}}}}}-{
\frac {u \left( x \right) a}{{b}^{2}}}-c_1{\frac {a}{b}{
{\rm e}^{-{\frac {bx}{a}}}}}+{\frac {u \left( x \right) x}{b}}+{\it 
c_2}$\\
 $a\neq 0,b\neq 0,c\neq 0$ & \text{Four forms based on the sign of $\Delta = b^2 -4ac$,~$a$ and $c$}\\
 \bottomrule
\normalsize
\end{tabular}
\label{table_uci}
\end{table*}


\subsubsection{Singularity of solutions} If one or two of the coefficients $a$, $b$ or $c$ become zero, then the solution of the differential equation falls into a singularity subspace that is different than the affine function space of neighboring positive or negative values for those coefficients. For example, for $b=0$ and $a*c>0$ , the solution will be $y(t) = \sin( \sqrt{\frac {c}{a}}t) c_2+\cos(\sqrt{\frac {c}{a}}t) c_1-\frac {u(t) }{c} \left( \cos(\sqrt{\frac {c}{a}}t) -1 \right)$, but for $b=c=0$ and $a\neq 0$, the solution has the form of $y(t)=\frac{1}{2}{\frac {u(t) {t}^{2}}{a}}+c_1t+c_2$. In this example, we observe that moving from $c>0$ to $c=0$ changes the resulting activation function from a pure oscillatory form to a parametric (leaky) rectified quadratic activation function. More formally, for a parametric variable quantity $p\in\{a,b,c, b^2-4ac\}$, if $p=0$, then the solution of the differential equation may be different than the solution of the neighboring differential equations with $p\neq0$ (see Table~\ref{singularities} for the complete set of singular subspaces). Moreover, with $p \rightarrow 0$, then we may have $y(t;p)\rightarrow \infty$ for certain values of $t$. In particular, when exponential forms are involved in the solution, this phenomenon can cause an extreme increase in the magnitude of the output value of the DEU. Therefore, we introduce a hyperparameter $\epsilon$ which is used as a threshold to clamp small values to zero. This simple trick avoids numerical computation errors as well as exponentiating large positive numbers when the denominator of the exponentiated fraction is very small.


Our learning algorithm allows the activation functions to jump over singular subspaces. However, if the absolute value of $a$,$b$, or$c$ falls below $\epsilon$, then the derivative with respect to that parameter becomes zero. The value of the parameter remains zero during the training if we use the regular derivative. In order to allow an activation function to ``escape" a singular subspace, we introduce the concept of ``outward gravitation" in the next subsection.


We do not allow $a=b=c=0$, and in this rare case, we set $b$ to $\epsilon$. During the learning process at most two of $a$, $b$, and $c$ can be zero. The sign of $a$, $b$, $c$, and $b^2-4ac$ \eat{(with $a, b, c \in \mathbb{R}$) } might also change the solution of the ODE, which create ``subspaces" that are individually solved in closed-form. 

When $b^2-4ac$  is close to zero and $ac>0$, the generic solution may become exponentially large. Therefore if $-\epsilon<b^2-4ac<\epsilon$ and $\sign(a)==\sign(c)$, we explicitly assume $a = \frac{b}{2}\sign(a)$ $c = \frac{b}{2}\sign(c)$ in our implementation to stabilize the solution and to avoid large function values (i.e., we force $b^2-4ac=0$ in the solution of the ODE.).\\

\subsubsection{Approximation of Dirac's delta function} The derivatives of the activation function with respect to $t$ include Dirac's delta function $\delta(t)$, which is the derivative of the Heaviside function. In the parametric derivatives, we substitute the delta function with its approximation $s \frac{e^{-s*t}}{(1+e^{-s*t})^2}$, which is the derivative of $\sigma(s*t)=\frac{1}{(1+e^{-s*t})}$. This approximation is a commonly used in practice for the delta function~\cite{zahedi2010delta}. The larger values of $s$ result in more accurate approximation of the delta function. In all of our experiments, we set $\epsilon = .01$, and $s =100$ although further tuning might improve the results.



\begin{figure*}[tbh]
\centering
\subfigure{\includegraphics[width=0.3\linewidth]{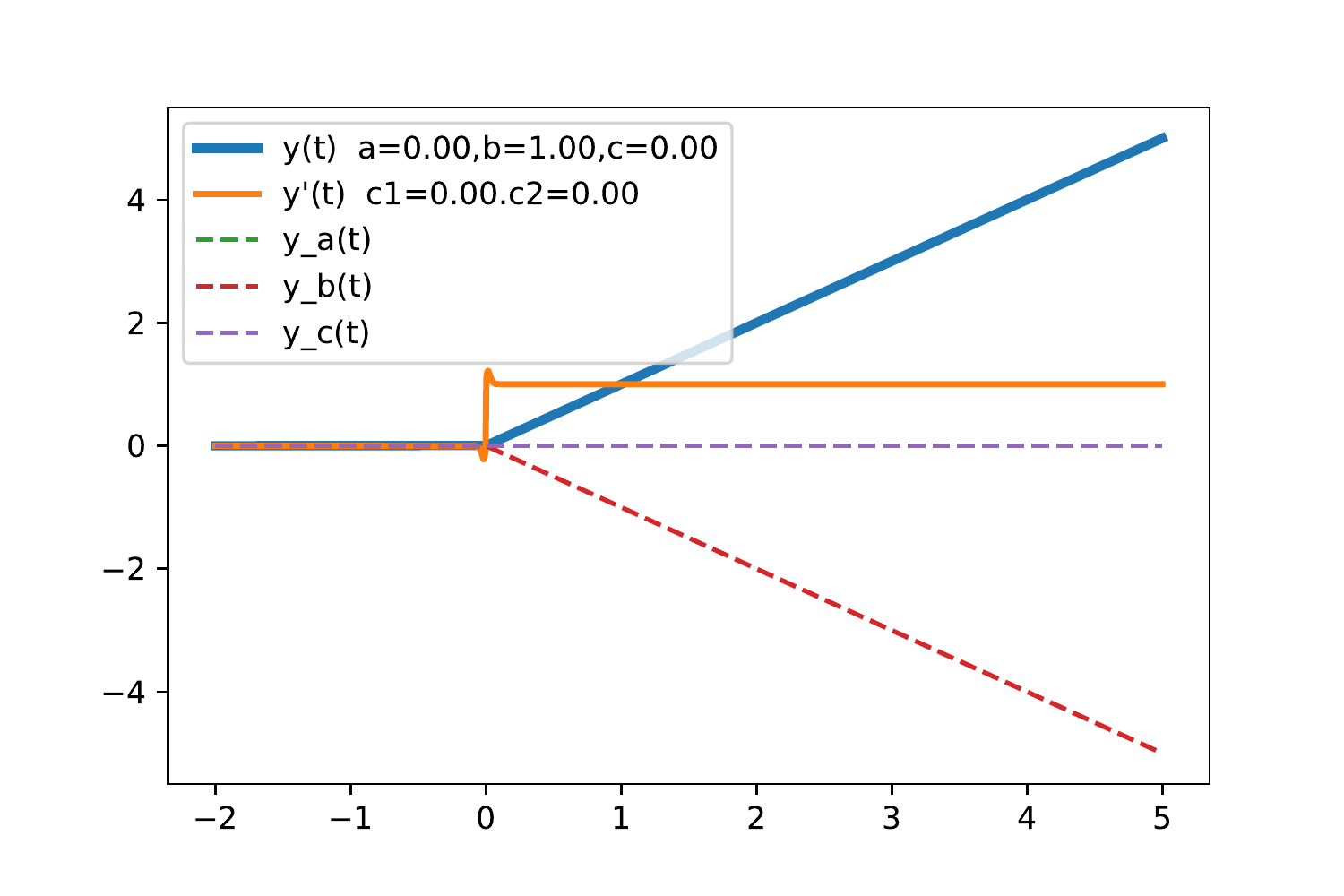}}
\subfigure{\includegraphics[width=0.3\linewidth]{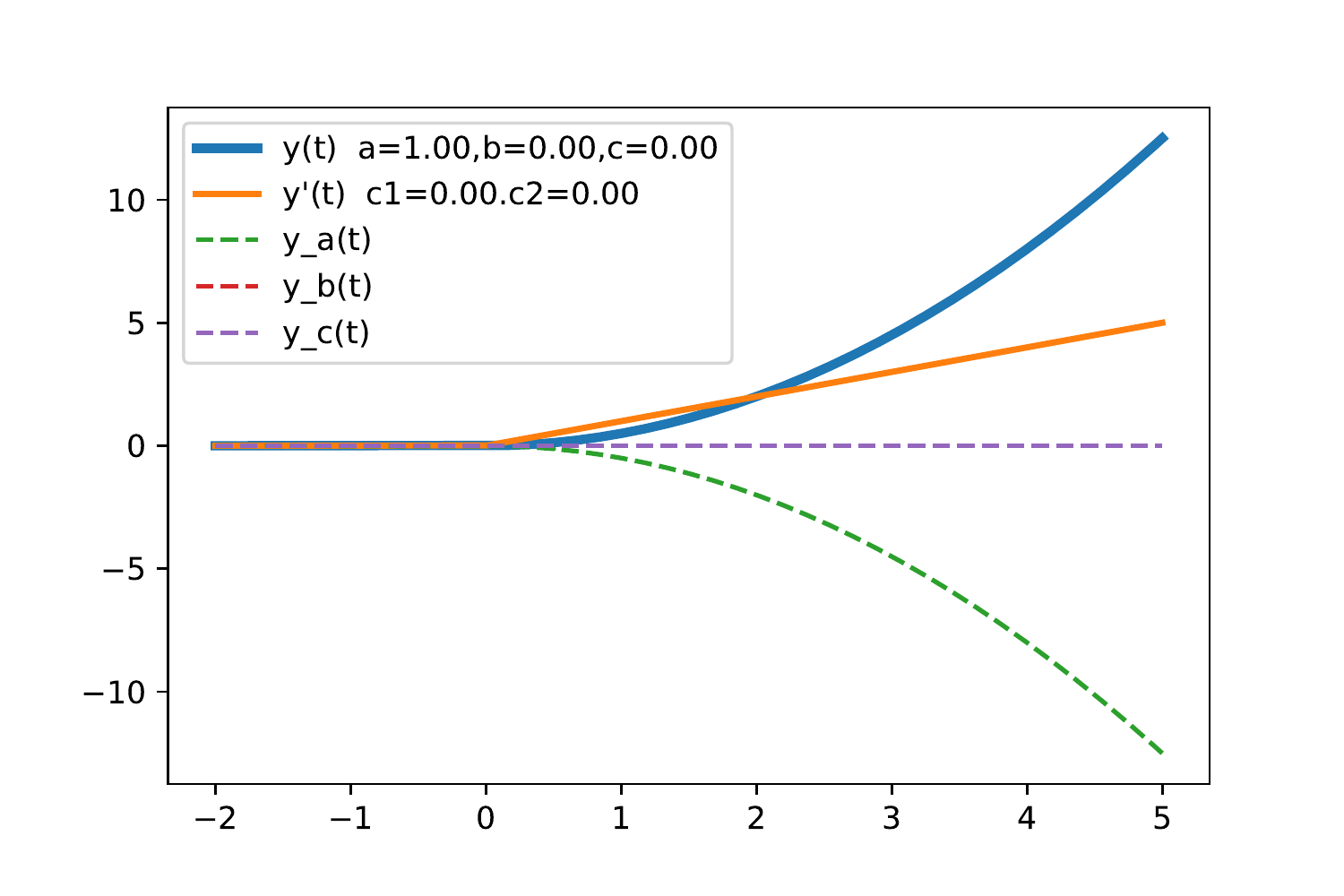}}
\subfigure{\includegraphics[width=0.3\linewidth]{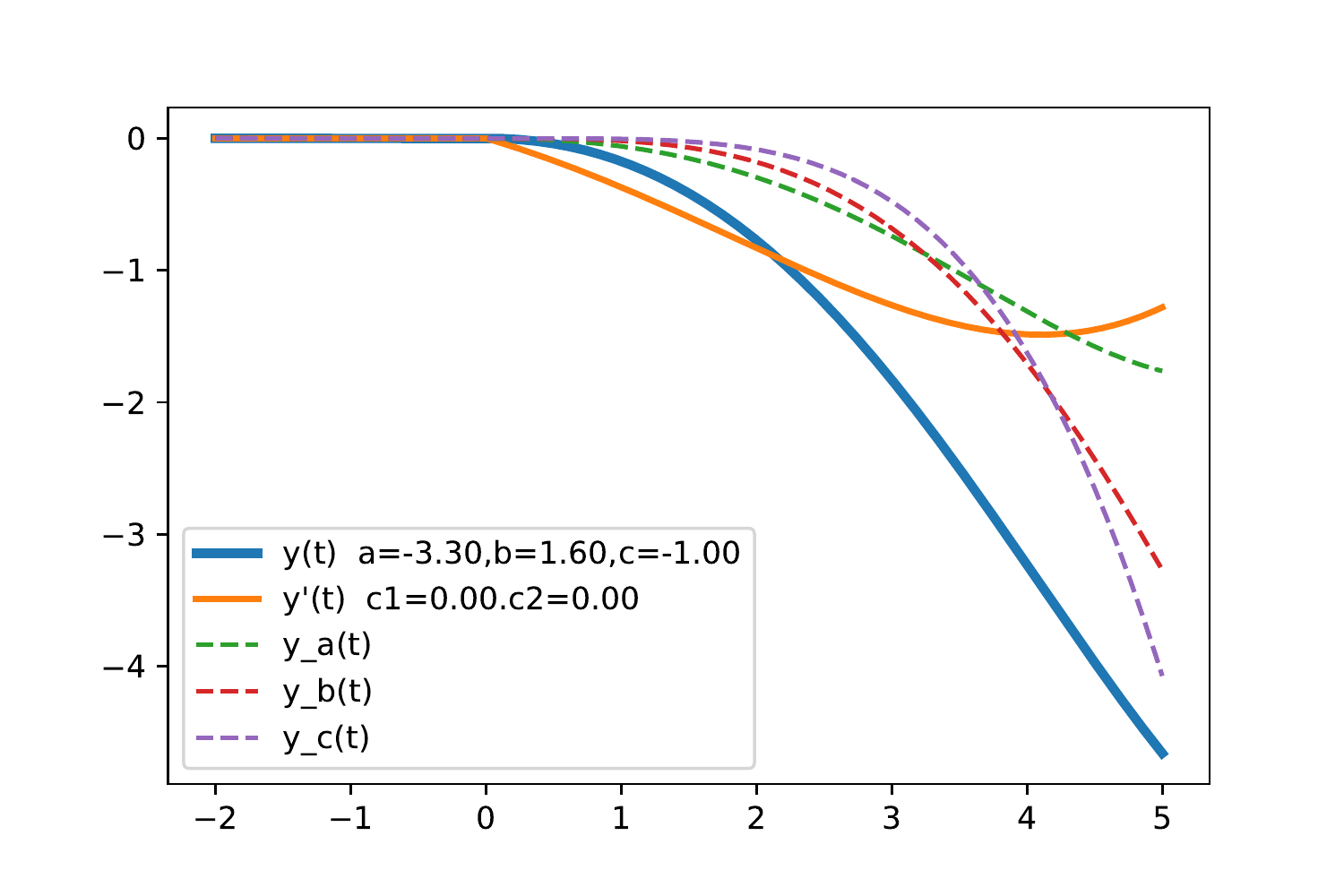}}
\subfigure{\includegraphics[width=0.3\linewidth]{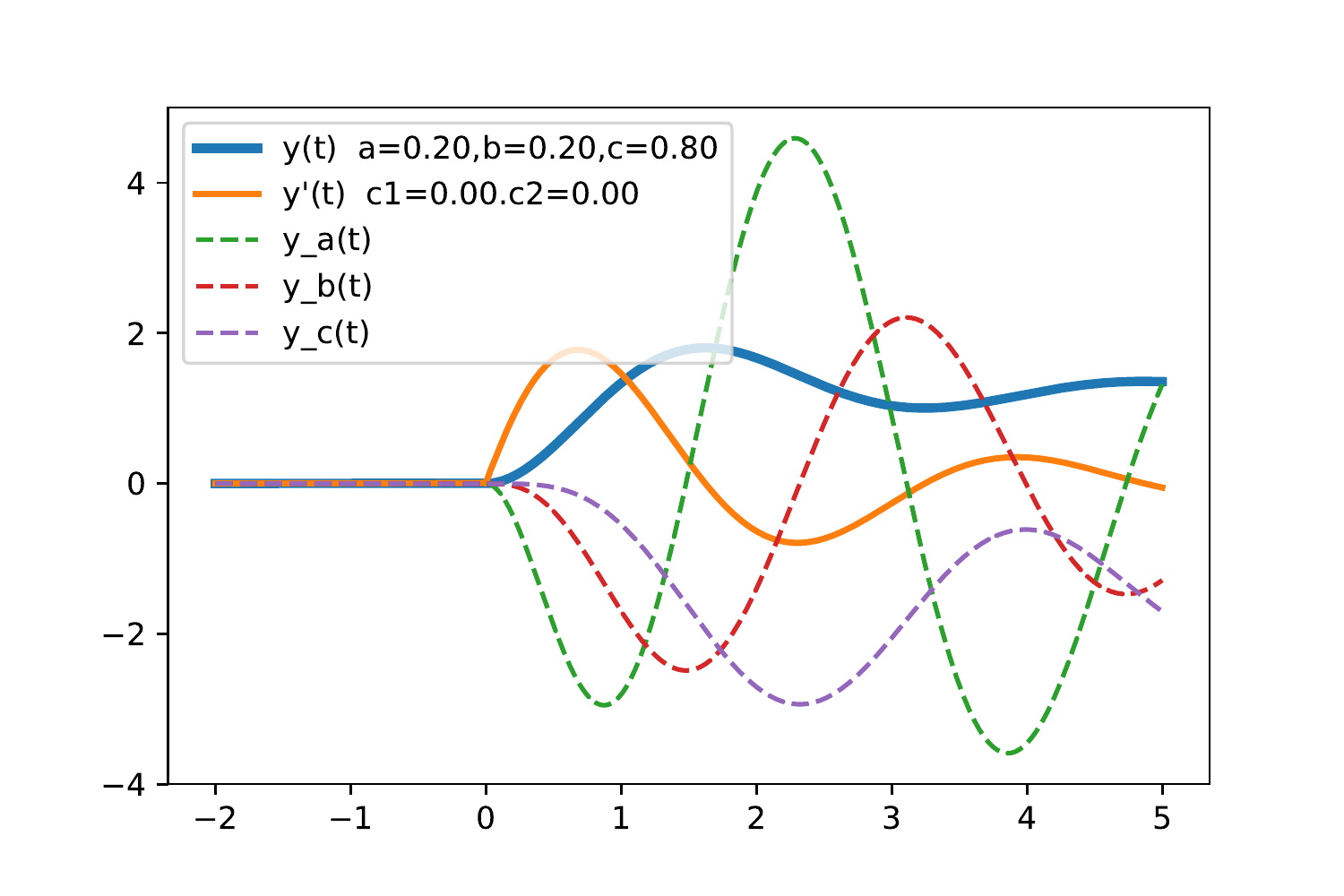}}
\subfigure{\includegraphics[width=0.3\linewidth]{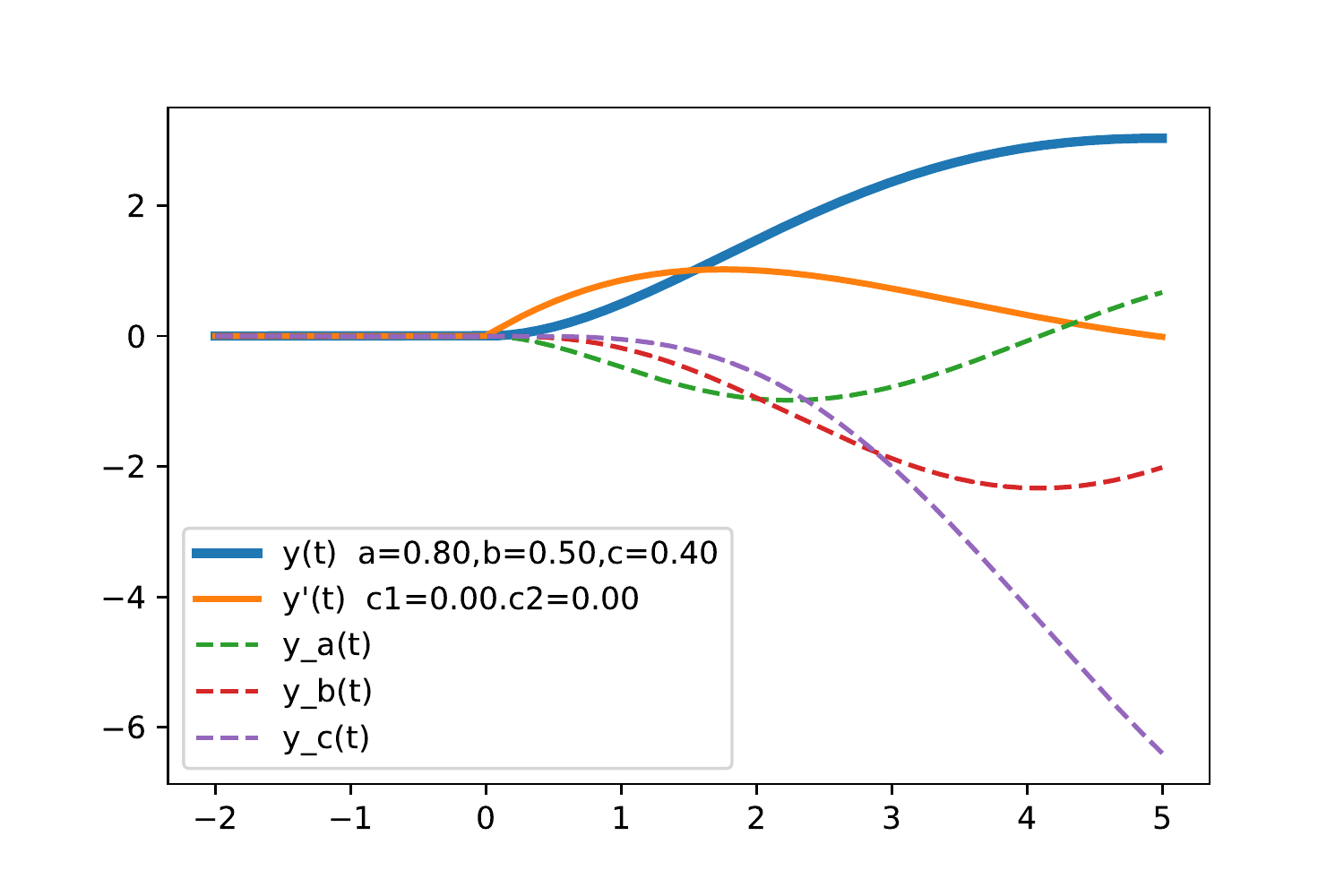}}
\subfigure{\includegraphics[width=0.3\linewidth]{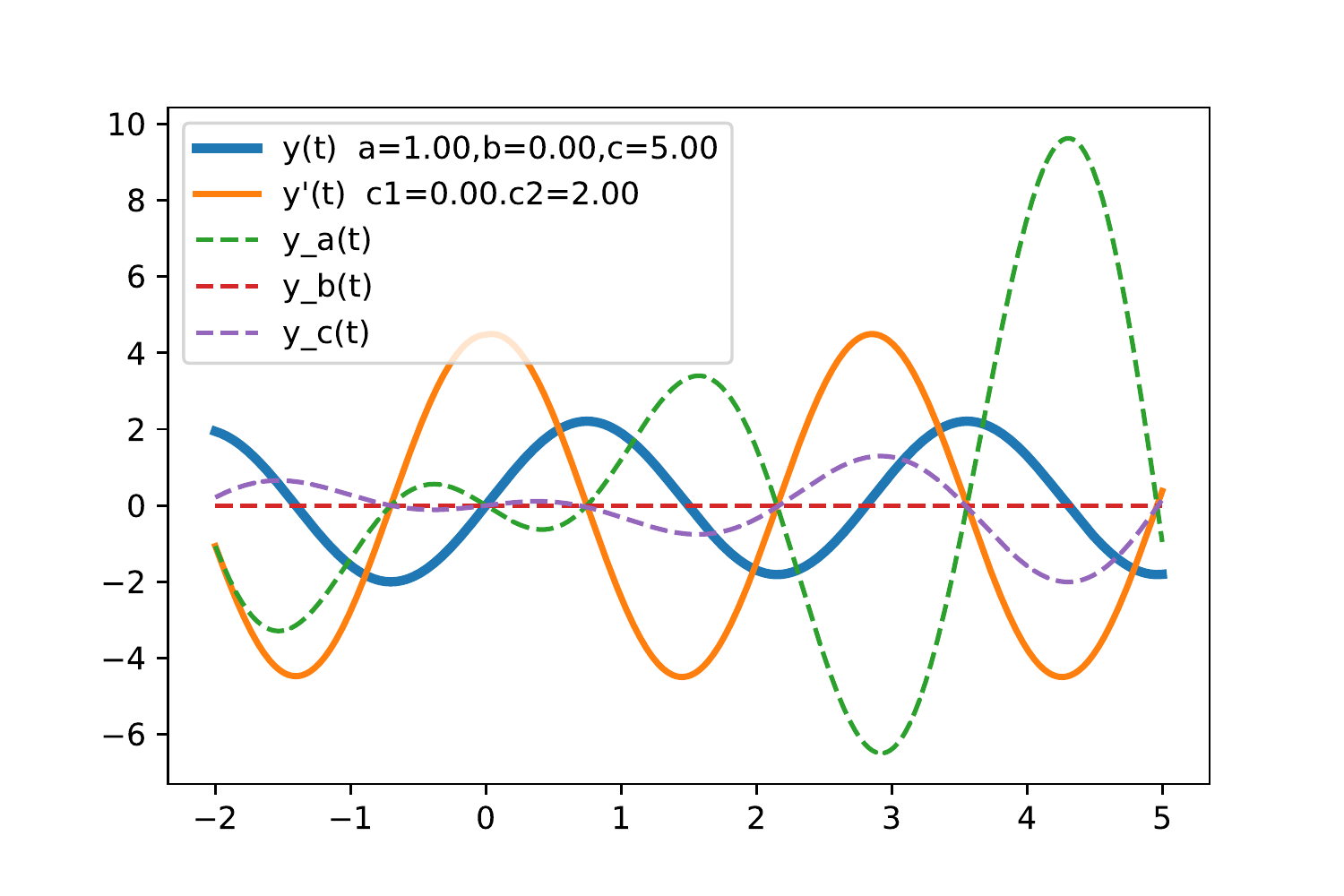}}
\caption{A sample set of DEU activation functions and their
 derivatives. The bold blue line is the activation function, and the orange solid line is its derivative with respect to $t$. The dashed lines are its derivatives with respect to $a$, $b$ and $c$. First and second on the top row from left are ReLU and
 ReQU. The bump in the derivative of ReLU is an artifact of approximating Dirac's delta.
\label{fig:example_fcns}
}
 
\end{figure*}

\subsubsection{Outward gravitation}
We need to let a DEU to jump out of a singular subspace when the network benefits from it. This way, no matter what the initialization has been and how the order of training samples has changed the geodesic path that an activation function follows to get the appropriate functional form, the activation function can recover from accidental falling into singularity points. To allow this, for a singular solution, we introduce a hypothetical non-singular differential equation that has the same initial conditions as the corresponding solution function at a reference point $t^*$.\footnote{The reference point can be $t^*=0$, or $t^*=t_0$ for some other choice of $t_0$. In our implementation, we set $t_0 = \frac{1}{N_{batch}}\sum_{i = 1}^{N_{batch}} t_i$ that is the center of  values in the training batch.}

For a parametric variable $p \in \{a,b,c\}$, \[\tilde{p} = \begin{cases} - \epsilon & - \epsilon < p <0 \\  \epsilon & 0 < p < \epsilon \\ p & \text{o/w.}\end{cases}\] is the closest value outside of the singular subspace ($\tilde{p}=p$ if the parameters are  not in a singularity region). If the value of any of the parameters $a$, $b$, or $c$ are in the singular subspace, we form a hypothetical differential equation $\tilde{a} y'' + \tilde{b} y' + \tilde{c} y = u(t)$, whose solution has the same initial condition  at the reference point $t^*$ as the current activation function (which is parameterized by $a$, $b$, $c$, $c_1$ and $c_2$). We use the derivative of the solution of this hypothetical differential equation for any of $a$, $b$, and $c$ that are in the singular subspace. It is important to certify that the initial conditions of the hypothetical ODE (i.e., the value of the function and its derivative with respect to input $t$) are the same as the initial conditions of the actual activation function (otherwise, the derivative with respect to parameters $a$, $b$, and $c$ would be from a function that behaves differently than the actual activation function). Let $y(t; a, b, c, c_1, c_2)$ be the solution of an activation function, then our goal is to choose $\tilde{c}_1$ and $\tilde{c}_2$ such that for $\tilde{y}(t; \tilde{a},  \tilde{b},  \tilde{c},  \tilde{c}_1,  \tilde{c}_2)$, we have: 

\begin{align} \label{eq_initial_val}
    & \tilde{y}(t^*; \tilde{a},  \tilde{b},  \tilde{c},  \tilde{c}_1,  \tilde{c}_2) = y(t^*; a, b, c, c_1, c_2)\\ \label{eq_initial_der}
    & \frac{\partial  \tilde{y}(t; \tilde{a},  \tilde{b},  \tilde{c},  \tilde{c}_1,  \tilde{c}_2)}{\partial t}\bigg\rvert_{t=t^*} = \frac{\partial  y(t; a, b, c, c_1, c_2)}{\partial t}\bigg\rvert_{t=t^*}
\end{align}

To find $\tilde{c}_1$ and $\tilde{c}_2$, we assign the values of $t^*$, $a$, $b$, $c$,  $c_1$,  $c_2$, $\tilde{a}$, $\tilde{b}$ and $\tilde{c}$ into equations (\ref{eq_initial_val}) and (\ref{eq_initial_der}), and  since $\tilde{y}$ is affine with respect to $\tilde{c_1}$ and $\tilde{c_2}$, the results forms a two-variable, two-equation system of linear equations that is solved in constant time (see appendix for the details). 


\eat{
\begin{algorithm} 
\caption{Differential equation network -- learning the parameters of a neuron} 
\label{alg1} 
\begin{algorithmic} 
 \INPUT 
 \STATE Mini-batch input from the previous layer $x =
 [x_1,x_2, \ldots,x_k]^T$, 
 \STATE Previous weights vector $w$, 
 \STATE Previous bias weight $w_0$, and 
 \STATE Backpropagated input errors for each sample $\delta_1, \ldots, \delta_k $
 \sForAll{$i$}{
 \quad $o_i$
}
 \STATE 1: Compute the derivatives
 \STATE $y \Leftarrow 1$
 \IF{$n < 0$}
 \STATE $X \Leftarrow 1 / x$
 \STATE $N \Leftarrow -n$
 \ELSE
 \STATE $X \Leftarrow x$
 \STATE $N \Leftarrow n$
 \ENDIF
 \WHILE{$N \neq 0$}
 \IF{$N$ is even}
 \STATE $X \Leftarrow X \times X$
 \STATE $N \Leftarrow N / 2$
 \ELSE[$N$ is odd]
 \STATE $y \Leftarrow y \times X$
 \STATE $N \Leftarrow N - 1$
 \ENDIF
 \ENDWHILE
\end{algorithmic}
\end{algorithm}}

\subsubsection{Implementation} Due to the individual activation functions associated parameters per neuron, each neuron can have a different activation. For example, one neuron could have a sinusoidal activation function while another has a quadratic form of ReLU. The direct way to implement our method would involve iterating over all neurons and computing the corresponding activation values individually, and would result in significantly higher latency than common activation functions such as ReLU. However, considering the ODE's closed form solutions, such computations are parallelizable on modern computing units like GPUs. We first compute a mask to assign each neuron to the subspace in which its current activation function resides. Next, we iterate over all possible functional subspaces, and compute in parallel the output of each neuron assuming that its activation function lies in the current subspace while ignoring the validity of the solutions or parameters. Finally, we use the subspace masks computed earlier to select the correct output of the neuron. A similar trick can be applied during the backward pass to compute the gradients in parallel using the masks computed during the forward pass. The pseudo-code is detailed in Algorithm \ref{alg:deu_alg}.

\begin{algorithm}
		\caption{Parallelized DEU}
		\label{alg:deu_alg}
		\begin{algorithmic}[1]
		    \Procedure{Deu}{$\text{input}$} 
			\State $\text{output} \leftarrow 0$
			\For {each singularity space S}
			\State $mask = \bm{1}_{[ n \in S]} \quad \forall n \in neurons$
			\If {$\text{mask} > 0$}
			\State $\text{output} \leftarrow \text{output} +  \text{mask}*f_S(\text{input})$

			\EndIf
			\EndFor
			\State Return $\text{output}$
			\EndProcedure
		\end{algorithmic}
\end{algorithm}

\subsection{Neural networks with DEUs are universal approximators}
Feedforward neural networks with monotonically-increasing activation functions are universal approximators
\cite{hornik1989multilayer,barron1993universal}. Networks with
radial basis activation functions that are bounded, increasing and then decreasing are also
shown to be universal approximators \cite{park1991universal}. A neural network with DEU is also a universal approximator (see Appendix for the proof). 

\subsubsection{A geometric interpretation}
The solution set of a differential equation forms a functional
manifold that is affine with respect to $c_1$ and $c_2$, but is
nonlinear in $a$, $b$, and $c$. Clearly, this manifold has a trivially
low dimensional representation in $\mathbb{R}^5$ (i.e.,
$\{a, b, c, c_1, c_2\}$). Gradient descent changes the
functionals in this low dimensional space, and the corresponding
functional on the solution manifold is used as the learned activation function. 
Figure~\ref{fig:manifold} attempts to visually explain
how, for example, a ReLU activation function transforms to a cosine
activation function. 

\begin{figure}[!tbh]
 \centering
 \includegraphics[width=.6\textwidth]{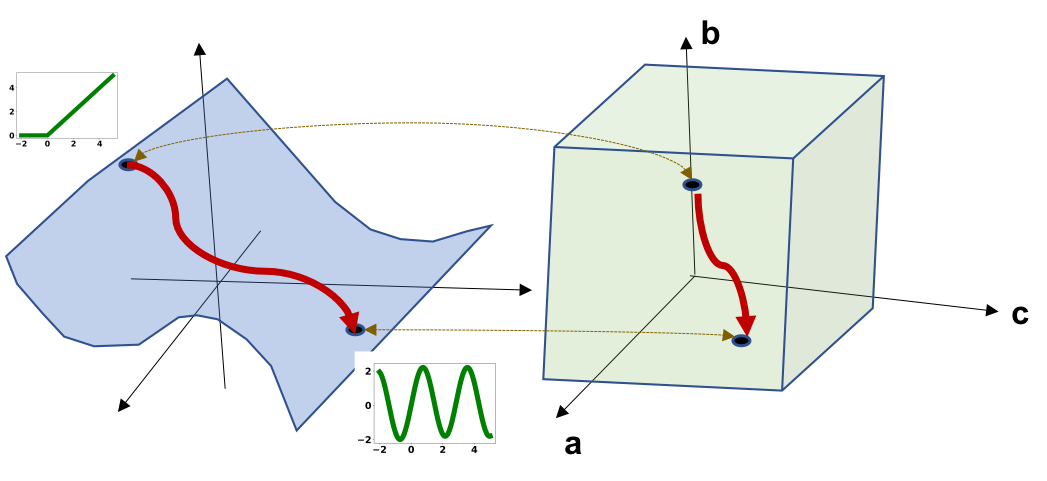}
 \caption{Left: the solutions of $ay''+by'+cy= u(t)$ lie on a
 manifold of functions. Right: every point on this
 manifold can be equivalently represented as a point on a
 $5$-dimensional space of $\{a,b,c,c_1,c_2\}$ (three shown here). The red arrow shows a path that an initialized function
 takes to
 gradually transform itself to a different one.}
 \label{fig:manifold}
\end{figure}

In appendix (toy datasets), we empirically show that only one DEU neuron that is initialized with ReLU ($a=c=c_1=c_2=0,~b=1$) can transform itself to a sine function. In this scenario, the learned differential equation coefficients $\{a, b, c\}$
and initial condition coefficients $\{c_1, c2\}$ perfectly represented a sine function after training. In contrast, the learned model by
ordinary fixed activation FFNNs were much less accurate with significantly larger networks. Figure~\ref{fig:var_coef} (in the appendix) visually shows how changing a parameter changes the activation function's behavior.

\subsubsection{Reduction to common activation functions}

Two common activation functions are the sigmoid $\sigma(t)= \frac{1}{1+e^{-t}}$, and the rectified linear unit
$\text{ReLU}(t)= \max(0,t)$. The sigmoid function is a smooth approximation of
the Heavyside step
function, and ReLU can be approximated by integrating
sigmoid of $s*t$ for a large enough $s$: $\max(0,t) \approx \int_{-\infty}^t \frac{1}{1+e^{-s z}}dz=\log(1 + e^{s t})/s$. \eat{(Figure~\ref{fig:sig_int})}Equivalently, $y(t)=\log(1 + e^t) +c_1\approx \text{ReLU}(t)+c_1$ will be a solution of the following first order linear
differential equation: $y'(t) = \frac{1}{1+e^{-st}}\approx u(t)$

\eat{
\begin{figure}[tbh]
 \centering
 \includegraphics[width=.35\textwidth]{charts/relu_sig_int.pdf}
 \caption{Sigmoid vs. step function \& integration of sigmoid vs. ReLU}
 \label{fig:sig_int}
\end{figure}
}

We can set the core activation function $g(t)$ to $\sigma(t) = \frac{1}{1+e^{-s t}}$, or to the step
function $u(t)$. For $g(t)=\sigma(t)$ when $a\neq 0$, the solutions of the differential 
equation will involve the Gauss hypergeometric
and $Li_2$ functions, which are expensive to evaluate. Fortunately, if we set the right hand side of the ODE to $u(t)$, then the particular solutions will only involve
common functions such as linear, exponential, quadratic, trigonometric, and hyperbolic functions.

In practice, if the learning algorithm decides that $a$ and $b$ should be zero, we use
$g(t)=\frac{1}{1+e^{-st}}$ (i.e. $y(t) = \frac{1}{c*(1+e^{-st})}$). Otherwise, we use the step function to avoid complex-valued solutions that involve special mathematical functions. 
With these conditions in place, if $a=0$, $b=0$, and $c=1$, we recover the
sigmoid function; if $a=0$, $b=1$, and $c=0$, we recover the
ReLU function; if $a=1$, $b=0$, and $c=0$, we obtain a parametric rectified
quadratic form $y = \text{ReLU}(t)^2+c_1t+c_2$ (similar to parametric ReLU \cite{he2015delving,xu2015empirical}), which is the solution of
$y''(t) = u(t)$. When $b^2-4ac<0$, we observe an oscillatory
behaviour. Depending on the sign of $b$, this can be decaying or exploding, but when $b=0$, we observe a purely oscillating activation function.

The above-mentioned cases are only a few examples of solutions that could
be chosen. The point to emphasize here is that an extensive range of
functions can be generated by simply varying these few parameters (Figure~\ref{fig:example_fcns} illustrates several examples).

\section{Related Work}
Recently, \citeauthor{chen&al18}~(\citeyear{chen&al18}) propose using ODE solvers in neural networks. However, the nature of our work is very different from Neural ODE. While that work maps iterative parts of the network to an ODE and uses external solver to find the solution and estimating the gradient, here we use the flexible functions that are the solutions of ODE as activation function. At the training time there is no ODE solver involved, but only functional forms of the solutions.

Oscillatory neurons are generally believed to be important for information processing in animal brains.
Efforts have been made to explore the usefulness of periodic oscillations in neural networks since the 1990s, especially for medical applications \cite{MinamiNT99}. However overall their applicability has not yet been appreciated \cite{sopena1999neural}.
In recent times, however, researchers have begun re-exploring the potential of periodic functions as activations~\cite{parascandolo2016taming}, and demonstrated their potentials for probabilistic time-series forecasting~\cite{HatalisK17} and lane departure prediction~\cite{TanW17}. Furthermore recent work has demonstrated how pattern recognition can be achieved on physical oscillatory circuits~\cite{VelichkoBB19}.

For vision, speech and other applications applications on mobile, or other resource-constrained devices, research has been ongoing to make compact networks. ProjectionNet~\cite{ravi2017projectionnet} and MobileNet~\cite{howard2017mobilenets} are both examples of methods that use compact DNN representations with the goal of on-device applications. In ProjectionNet, a compact \textit{projection} network is trained in parallel to the primary network, and is used for the on-device network tasks. MobileNet, on the other hand, proposes a streamlined architecture in order to achieve network compactness. In these approaches, the network compactness is achieved at the expense of performance.
We propose a different method for learning compact, powerful, stand-alone networks: we allow each neuron to learn its individual activation function enabling a compact neural network to achieve higher performance.


\section{Results and Discussion}

We have conducted several experiments to evaluate the performance and compactness of DEU networks.

\begin{table}[t]
\centering
\caption{\label{tab:exp_img} Test accuracy of different models on the MNIST and Fashion-MNIST image classification task.}
\begin{tabular}{l r || r r}
\hline
\toprule
Model & Size & MNIST & Fashion-MNIST \\

\toprule
MLP-ReLU & 1411k & 98.1 &  89.0\\ 
CNN-ReLU & 30k &99.2 & 90.2 \\
\midrule
MLP-SELU & 1293k & 95.5  & 87.5  \\
CNN-SELU & 21k & 98.8 &  89.6   \\
\midrule
MLP-PReLU & 1293k & 97.4    & 88.7       \\
CNN-PRelu & 21k & 98.9 &  89.6  \\
\midrule
MLP-DEU & 1292k & 98.3 &   89.8 \\
CNN-DEU & 21k & 99.2 & \textbf{91.5}  \\
\midrule
MLP-Maxout & 2043k & 98.5 &   89.4 \\
CNN-Maxout & 26k & \textbf{99.4} & 91.3  \\
\bottomrule
\end{tabular}


\end{table}




\begin{table*}
\caption{\label{tab:exp_cifar} Test accuracy using different ResNet architectures and activation functions on the CIFAR-10 image classification task.}
\centering
\begin{tabular}{l r ||r r r r r r}
\toprule
Architecture & Size & ReLU & PReLU&  SELU & Swish & Maxout & DEU  \\
\toprule
ResNet-18 &  11174k & 91.25 & 92.1 & 92.2& 91.9 & \textbf{92.5}& \textbf{92.5} \\
Preact-ResNet & 11170k & 92.1 & 92.2 & 92.3 & 92.0 & \textbf{92.4} & 92.3  \\
ResNet-Stunted & 678k & 89.3 & 89.4 & 90.5& 90.1 & \textbf{91.1}& 90.7 \\
\bottomrule
\end{tabular}

\end{table*}


\subsection{Classification}
We evaluate DEU on different models considering the classification performance and model size. We first use MNIST and Fashion-MNIST as our datasets to assess the behavior of DEUs with respect to the commonly used ReLU activation function, as well as Maxout and SELU. 
The neural network is a 2-layer MLP with 1024 and 512 dimensional hidden layers. While the CNN used is a 2-layer model made by stacking 32 and 16 dimensional convolutional filters atop one another followed by average pooling.
DEUs are competitive or better than normal networks for these tasks while having substantially smaller number of parameters (see Table \ref{tab:exp_img}).

Next we perform a more direct comparison of the effect of DEU on classification performance against ReLU, PReLU~\cite{he2015delving}, Maxout~\cite{goodfellow2013maxout}, and Swish~\cite{ramachandran2017searching} activation functions on the CIFAR-10 dataset. PReLU is similar to ReLU with a parametric leakage and Swish has the form of $f(x) = x*\textit{sigmoid}(\beta x)$ with a learnable parameter $\beta$.

For these experiments, we keep the network architecture fixed to ResNet-18~\cite{he2016deep} and use the hyperparameter settings as in He et al.~(\citeyear{he2016deep}). 
We further show that this improvement persists across other model designs. First we use a preactivation ResNet~\cite{he16preact}, which is a ResNet-like architecture with a slightly smaller size. Second, to assess suitability for reducing the model size, we experiment with a stunted ResNet-18. The stunted model is created by removing all 256 and 512 dimensional filters from the standard ResNet-18 model. The result of this comparison (presented in Table~\ref{tab:exp_cifar}) indicates that DEUs constantly work better than ReLU, PReLU, SELU, and Swish. Although Maxout is slightly better than DEU, it is using much more parameters (13272k vs 11174k for Resnet,  13267k vs 11174k for Preact-ResNet, and 801k vs 678k for ResNet-Stunted), which makes the comparison biased toward Maxout. \\
\subsubsection{Convergence comparison on MNIST} 
Figure~\ref{fig:mnist_comp} shows the classification error on MNIST across different activations as the training progresses. It is clear that DEUs are better at almost all steps. Also as one might expect, they are better in the initial epochs due to the greater expressiveness of the activation function.\\

\begin{figure}[!tbh]
\centering  
\includegraphics[width=.6\columnwidth]{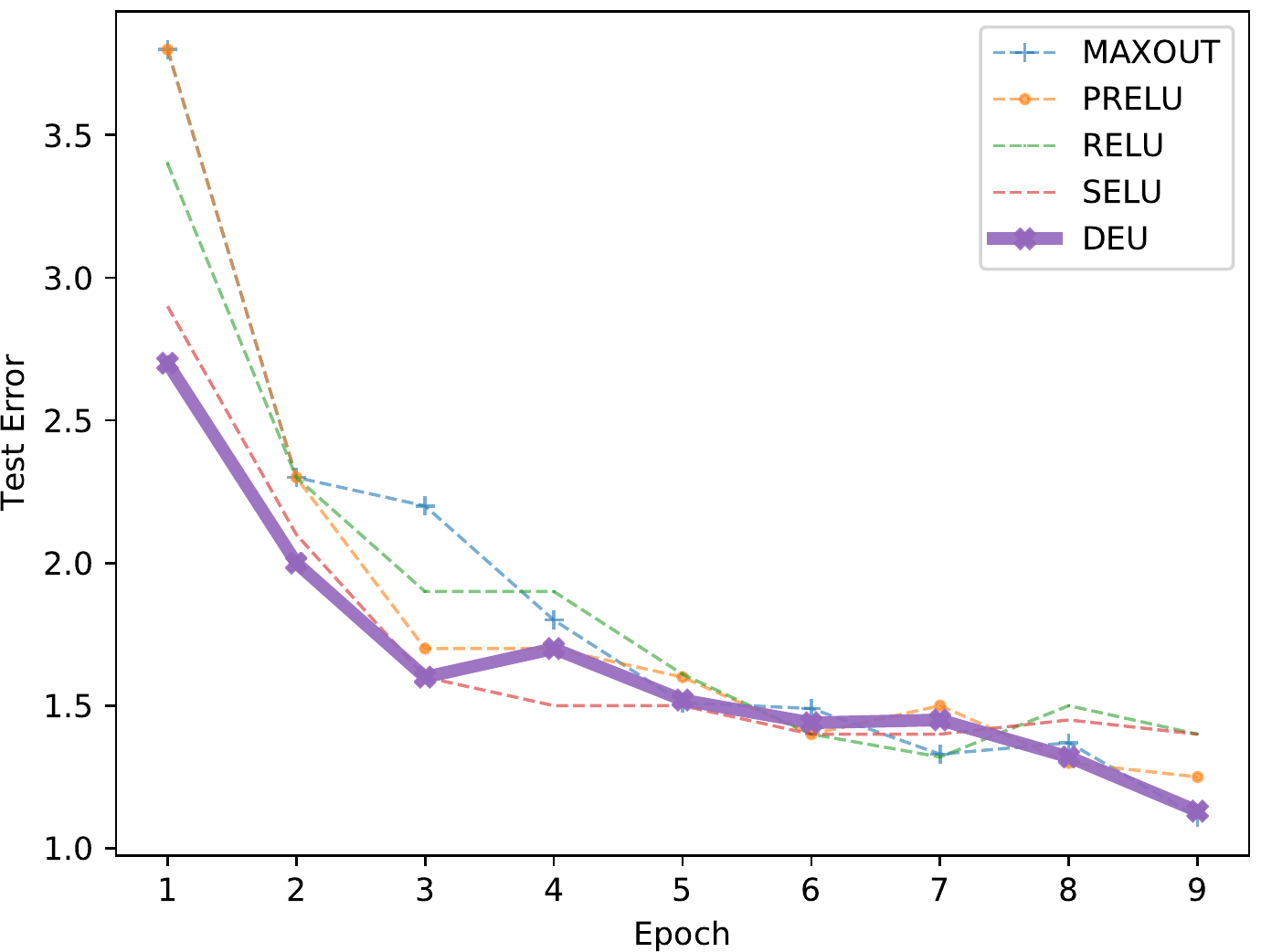}
\caption{Convergence comparison of different activation function on MNIST.}
\label{fig:mnist_comp}
\end{figure}



\subsubsection{Computational cost}
Prima-facie our method seems extremely compute-intensive. However as described earlier, with judicious design one can parallelize the overall model to be quite fast. In the worst case, our method will be $|S|$ times slower than the same architecture with standard activation like ReLU, where $|S|$ is the number of singular subspace solution of the ODE. Nevertheless, in practice all subspaces are unlikely to occur together. In most of our experiments on real data, we observed that three to five different solutions appear.
Furthermore, we evaluate computation time of Resnet-18 models with DEUs and ReLUs on CIFAR-10 using a batch size of 128 on a Tesla K40c GPU. The model with DEUs takes 640ms total for combined forward and backward pass per batch, while the one with ReLUs requires 180ms per step. 
During prediction on test data, the time taken per batch is 111ms for the DEU-based model and 65ms for ReLU-based model.

\subsection{Regression}
We compare the performance of neural networks with one hidden-layer of one, two, four, eight, and 16 neurons with DEU, ReLU, LeakyReLU (LReLU), SELU, Swish and Maxout activation functions applied to a standard diabetes regression dataset.\footnote{(https://www4.stat.ncsu.edu/~boos/var.select/diabetes.html)}  We use 3-fold cross validation and report the average performance.

Figure~\ref{fig:reg_comp} shows that the neural networks with DEUs achieve specifically better performance with more compact networks.  We see that other activation functions do not surpass the single DEU neuron performance until they are eight or more neurons.

\begin{table}[h]
\caption{Evaluation summary (in RSE and CORR) of different activations on different datasets: {\bf Traffic.} A collection of 48 months (2015-2016) hourly data
from the California Department of Transportation (6 and 12 hours look ahead prediction); \textbf{Solar-Energy}. Solar power production records in the year of 2006 sampled every 10 minutes from 137 PV plants in Alabama State (6 and 12 hours look ahead prediction); \textbf{Electricity}. The electricity consumption in kWh  recorded
every 15 minutes from 2012 to 2014(12 hours look ahead prediction) \cite{LaiCYL17}).}
\centering
\small{
\begin{tabular}{ l||l c c c c c }
\toprule
Func.               &      & \multicolumn{2}{c}{Traffic} & \multicolumn{2}{|c|}{Solar output} & Elect. \\
\midrule
                       &      & 6            & 12           & 6               & 12             &  12           \\
\midrule
                     
\multirow{2}{*}{DEU}   & RSE  & 0.487       & {\bf 0.500}       &  0.269          & {\bf 0.327}         & {\bf 0.100}      \\
                       & CORR & 0.870        & {\bf 0.863}        &  0.965           & {\bf 0.947}          & {\bf 0.912}       \\
                       \hline
\multirow{2}{*}{ReLU}  & RSE  & 0.499       & 0.520       & 0.270          & 0.433         & 0.104      \\
                       & CORR & 0.869        & 0.851        &  0.965           & 0.906          & 0.900       \\
                       \hline
\multirow{2}{*}{Swish} & RSE  & {\bf 0.483}       & 0.505       & 0.270          & 0.329          & 0.104      \\
                       & CORR & {\bf 0.872}       & 0.862        & 0.965          & 0.944          & 0.908   \\
                       \hline
\multirow{2}{*}{Maxout} & RSE  &  0.493      & 0.501       & {\bf 0.265}          & 0.328          & 0.107      \\
                       & CORR &  0.863       & 0.868        & {\bf 0.967}           & 0.945          & 0.911   \\
\bottomrule
\end{tabular}
}
\label{tab:regression_comp}
\end{table}

\begin{figure}[t]
\centering  
\includegraphics[width=.8\columnwidth]{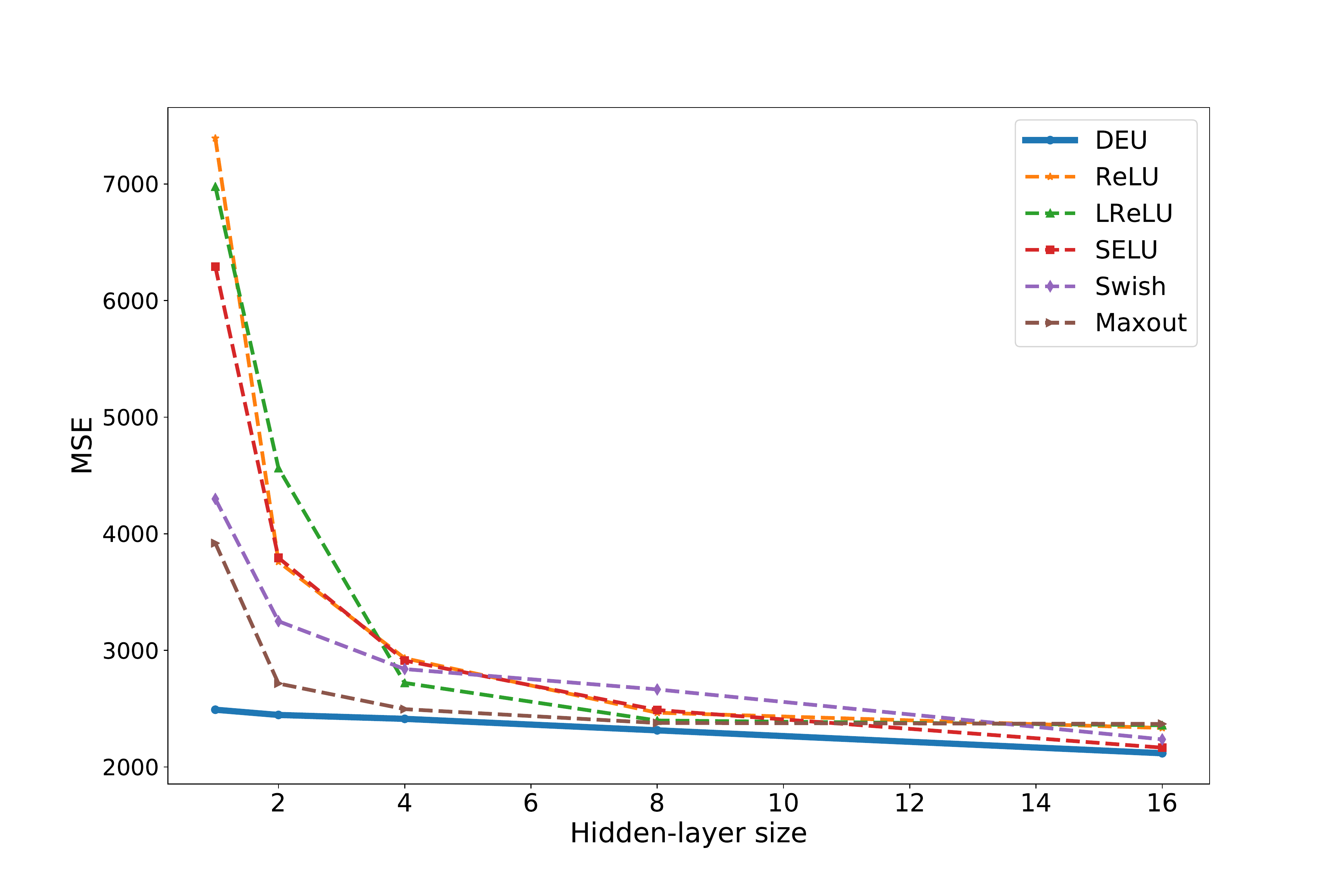}
\caption{Convergence comparison of Diabetes.}
\label{fig:reg_comp}
\end{figure}

We test our activation function on multiple regression tasks involving timeseries forecasting. For these tasks, we use the LSTNet model~\cite{LaiCYL17} and evaluate the functions based on the root relative squared error (RSE, lower is better) and correlation (CORR, higher is better) metrics. The results are presented in Table \ref{tab:regression_comp}. DEU gives improvement in all cases and substantial improvements in some datasets.

\eat{
\begin{figure}[!tbh]
\centering  
\includegraphics[width=1.0\linewidth]{../../code/DiffEqsNet/nnet_custom/pytorch_module/experimental_output/diabetes_regression_performance.pdf}
\caption{Diabetes Regression Model Convergence Comparison}
\label{fig:reg_comp}
\end{figure}

Figure~\ref{fig:reg_comp} above shows that the differential equation network achieves performance on par with significantly larger fixed activation networks. We see that the fixed activation networks do not surpass the single neuron DEU performance until they have  8 or more neurons.

}

\eat{
\subsection{CIFAR10}
For this experiment we compared two version of ResNet18, one standard
\cite{he2016deep}, and one with DifEN activations in place of ReLU. 
Again, we performed 3 fold cross-validation and report the mean result.

\eat{
\begin{figure}[!tbh]
\centering  
\includegraphics[width=0.47\linewidth]{../charts/cifar10_training_curve_comp.pdf}
\caption{CIFAR10 training error by version. V1 shows the output from ResNet18 unchanged and V2 shows the training curve for ResNet18 modified with a single layer if DifEN directly preceding the final fully connected layer}
\label{fig:cifar10_training_comp}
\end{figure}
}

\begin{table}[!tbh]
\caption{Comparison of performance on CIFAR10 - V1 is a standard version of ResNet18 while V2 inserts a single DifEN layer (of the same size) directly before the final fully connected layer}
\centering
\begin{tabular}{lll}
  \toprule
  Data Set & Version & Accuracy \\
  \midrule
  CIFAR10 & V1 & 0.8554 \\
          & V2 & \bf{0.8601} \\
\bottomrule
\normalsize
\end{tabular}
\label{table_vision}
\end{table}
}



\section{Conclusion}

In this paper we introduce  differential equation units (DEUs), as  novel activation functions based on the solutions of second-order ordinary differential equations (ODEs). DEUs can adapt their function form based on the features of data during training by learning the parameters of ODEs using gradient descent. We have showcased the ability of neural networks with DEUs to learn complicated concepts with a compact network representation. We have demonstrated DEUs' potential to outperform conventional activation function across a number of tasks, and with a reduced network size. Modern DNNs achieve performance gains in large by increasing the size of the network, which is not a sustainable trend. In response, we believe that this line of research can open future directions to explore more complex activation functions such as using the solutions of partial differential equations in order to compactly represent complex functions.


\bibliography{all}
\bibliographystyle{iclr2019_conference}

\clearpage

\appendix

\section{Neural networks with DEUs are Universal Approximators}\label{app:universal}

\begin{lemma}\label{lemma_osci}
If $b^2-4ac<0$, the solutions of $a y''(t) + b y'(t) + c y(t) =u(t)$
will oscillate with frequency $\omega =\frac{\sqrt{4ac-b^2}}{2a}$, and in particular, if $b=0$, then $\omega =\sqrt{\frac{c}{a}}$.
\end{lemma}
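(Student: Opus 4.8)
The plan is to treat \eqref{diffeq_activation} as a standard constant-coefficient linear ODE and read off the oscillation frequency from the roots of its characteristic polynomial. First I would analyze the homogeneous equation $ay'' + by' + cy = 0$, whose characteristic equation is $ar^2 + br + c = 0$ with roots $r = \frac{-b \pm \sqrt{b^2 - 4ac}}{2a}$. The hypothesis $b^2 - 4ac < 0$ forces these roots to form a complex-conjugate pair $r = \alpha \pm i\omega$ with real part $\alpha = -\frac{b}{2a}$ and imaginary part $\omega = \frac{\sqrt{4ac - b^2}}{2a}$, which is exactly the claimed quantity.

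Next I would write the general homogeneous solution in real form as $y_h(t) = e^{\alpha t}\left(c_1 \cos(\omega t) + c_2 \sin(\omega t)\right)$, which manifestly oscillates at angular frequency $\omega$, modulated by the envelope $e^{\alpha t}$. To handle the inhomogeneous term I would note that $b^2 - 4ac < 0$ implies $4ac > b^2 \geq 0$, hence $ac > 0$ and in particular $c \neq 0$; therefore on the region $t > 0$ where $u(t) = 1$ a constant particular solution $y_p = 1/c$ solves the equation (while on $t \leq 0$ the equation is homogeneous). The full solution $y = y_h + y_p\, u(t)$ differs from $y_h$ only by a piecewise-constant shift, so it retains the oscillation frequency $\omega$.

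Finally, to obtain the special case I would set $b = 0$: the real part $\alpha$ then vanishes, so the oscillation is undamped, and $\omega = \frac{\sqrt{4ac}}{2a} = \frac{\sqrt{ac}}{a} = \sqrt{\frac{c}{a}}$, using $ac > 0$. The only point requiring care is the sign of $a$: when $a < 0$ (with $c < 0$ so that $ac > 0$) the formula as written is negative, so strictly speaking the physical frequency is $|\omega|$. Since $\cos$ is even and $\sin$ is odd, flipping the sign of $\omega$ merely renames $c_1$ and $c_2$ and leaves the oscillatory conclusion intact. This sign bookkeeping, rather than any genuine analytic difficulty, is the only real subtlety; the statement is otherwise an immediate consequence of the classical theory of linear constant-coefficient ODEs.
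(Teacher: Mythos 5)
Your proof is correct and takes essentially the same route as the paper's: both read off $\omega = \frac{\sqrt{4ac-b^2}}{2a}$ as the imaginary part of the complex-conjugate roots of the characteristic equation $ar^2 + br + c = 0$ and specialize to $b = 0$. You are in fact more careful than the paper, which stops at Euler's formula: your explicit treatment of the inhomogeneous term via the piecewise-constant particular solution $u(t)/c$ (legitimate since $b^2 - 4ac < 0$ forces $ac > 0$, hence $c \neq 0$) and your sign bookkeeping for $a < 0$ (where the stated formula gives $-\sqrt{c/a}$ and the physical frequency is $|\omega|$) close small gaps the paper leaves implicit.
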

\begin{proof}
If $b^2-4ac<0$, the roots of the characteristic equation of the ODE will be $\frac{-b\rpm i \sqrt{4ac-b^2}}{2a}$, where $i=\sqrt{-1}$. By substituting the solutions in Euler's formula the resulting sine and cosine functions will have frequency $\omega =\frac{\sqrt{4ac-b^2}}{2a}$. In particular, if $b=0$, we'll have $\omega =\sqrt{\frac{c}{a}}$. 
\end{proof}

Particularly, the solution of the ODE $a y''(t) + c y(t) =u(t)$ for $ac>0$ is $y(t) = \sin( \sqrt{\frac {c}{a}}t) c_2+\cos(\sqrt{\frac {c}{a}}t) c_1-\frac {u(t) }{c} \left( \cos(\sqrt{\frac {c}{a}}t) -1 \right)$. We will use this in the proof of the following theorem:

\begin{theorem}
A neural network with DEU is a universal approximator.
\end{theorem}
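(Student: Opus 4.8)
The plan is to exploit the oscillatory solutions isolated in Lemma~\ref{lemma_osci}: a single hidden layer of DEUs operating in the sinusoidal regime can reproduce arbitrary multivariate sinusoids of the input, and finite linear combinations of such sinusoids are dense in $C(K)$ for every compact $K\subset\mathbb{R}^d$. (A much shorter route also exists: the singularity table shows that $a=0,b=0,c=1$ gives exactly $\sigma(t)$, so every sigmoid network is a special DEU network, and universality follows from the classical Cybenko/Hornik theorem. But since the excerpt deliberately records the explicit sinusoidal solution just before the statement, I would present the Fourier-based argument that uses it.)

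First I would fix each hidden neuron in the regime $b=0$, $ac>0$, so that its activation is the function quoted before the theorem,
\[
y(t)=c_2\sin(\omega t)+c_1\cos(\omega t)-\tfrac{u(t)}{c}\bigl(\cos(\omega t)-1\bigr),\qquad \omega=\sqrt{c/a}.
\]
The Heaviside term is the only obstruction to $y$ being a pure sinusoid. To remove it, note that the neuron sees the pre-activation $t=\mathbf{w}^\top\mathbf{x}+w_0$; since $\mathbf{x}$ ranges over a compact set, $\mathbf{w}^\top\mathbf{x}$ is bounded, so I can pick the bias $w_0$ negative enough that $t<0$ throughout the domain, where $u(t)\equiv 0$ and the activation collapses to $c_2\sin(\omega t)+c_1\cos(\omega t)$. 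Absorbing $\omega$ into $\mathbf{w},w_0$ (write $\mathbf{v}=\omega\mathbf{w}$) and using the angle-addition identity, a single DEU neuron then realizes an arbitrary scaled, phase-shifted sinusoid $A\cos(\mathbf{v}^\top\mathbf{x}+\phi)$, with the amplitude $A$ and phase $\phi$ governed by $(c_1,c_2)$ and the frequency vector $\mathbf{v}$ governed by $\mathbf{w}$ and $\omega$.

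Next I would establish density. The output neuron forms a finite linear combination $\sum_k A_k\cos(\mathbf{v}_k^\top\mathbf{x}+\phi_k)$, i.e. a trigonometric polynomial in $\mathbf{x}$. The collection of all such functions contains the constants (take $\mathbf{v}=\mathbf{0},\phi=0$), separates points of $K$ (choose $\mathbf{v}$ with $\mathbf{v}^\top(\mathbf{x}-\mathbf{y})\neq 0$), and is closed up to linear combination under multiplication via the product-to-sum identity $\cos\alpha\cos\beta=\tfrac12[\cos(\alpha-\beta)+\cos(\alpha+\beta)]$, hence forms a point-separating subalgebra of $C(K)$. By the Stone--Weierstrass theorem it is dense in $C(K)$ (equivalently, density follows from convergence of multivariate Fourier series). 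Therefore a two-layer DEU network approximates any continuous target on $K$ to arbitrary accuracy, proving the theorem.

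The main obstacle I anticipate is the rigorous handling of the Heaviside term: I must check that neutralizing $u(t)$ by the bias shift is legitimate for every hidden neuron \emph{simultaneously} without sacrificing the independent control over the frequencies $\mathbf{v}_k$ and phases $\phi_k$ that the Stone--Weierstrass step requires. The key point is that $w_0$ affects only the phase (which $c_1,c_2$ can freely recalibrate) and leaves $\mathbf{v}=\omega\mathbf{w}$ untouched, so full expressive freedom is retained per neuron. A secondary technical point, which I would state but not belabor, is confirming that $(c_1,c_2)$ genuinely sweeps every amplitude-phase pair and that varying $(a,c)$ supplies every $\omega>0$, so the realizable sinusoids are rich enough for density.
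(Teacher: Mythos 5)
Your proposal is correct, and it rests on the same foundation as the paper's proof---the oscillatory regime $b=0$, $ac>0$ of Lemma~\ref{lemma_osci}---but it diverges at both technical pinch points, and the comparison is instructive. To eliminate the Heaviside particular-solution term $-\frac{u(t)}{c}\left(\cos(\omega t)-1\right)$, the paper never touches the bias: it instantiates \emph{two} neurons per frequency with identical $(a,b,c)$ but different initial conditions ($c_{1i_1}=\gamma_i$, $c_{2i_1}=\beta_i$ versus $c_{1i_2}=c_{2i_2}=0$) and output weights $+1$ and $-1$, so the Heaviside term cancels exactly and the pair realizes a pure sinusoid on all of $\mathbb{R}$; your bias-shift trick instead pushes the pre-activation below zero on the domain so that $u(t)\equiv 0$ there, using one neuron per sinusoid but relying essentially on compactness of $K$. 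Second, for density the paper simply takes the Fourier transform of the scalar target $h(t)$ and synthesizes $\sum_{i}\beta_i\sin(\omega_i t)+\gamma_i\cos(\omega_i t)=h(t)$---an argument that, as written, tacitly assumes $h$ has a discrete (indeed finite) frequency spectrum, i.e.\ it proves exact representation of finite trigonometric sums rather than approximation of arbitrary continuous functions; your Stone--Weierstrass argument on the point-separating, constant-containing algebra of trigonometric polynomials handles arbitrary continuous targets on compact $K\subset\mathbb{R}^d$, treats multivariate inputs explicitly, and is the more rigorous route to the generality the theorem actually claims. Your parenthetical sigmoid shortcut ($a=b=0$, $c=1$ recovers $\sigma(t)$, then invoke Cybenko/Hornik) is also anticipated by the paper, which remarks after its proof that Stone--Weierstrass applied to the sigmoid/ReLU-like solutions gives an alternative derivation. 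In short: you trade the paper's exact two-neuron cancellation (domain-independent, but twice the width) for a compactness-based suppression of $u(t)$, and you replace the paper's exact Fourier synthesis with a density argument that closes a gap the paper's version leaves open.
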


\begin{proof}
By Lemma~\ref{lemma_osci}, for $b=0$ and $c=1$, the oscillation frequency of the activation function of a neuron will be $\omega =1/\sqrt{a}$. Therefore, by varying $a \in \mathbb{R}^+$, a neuron activation function can generate all oscillation frequencies. Now, consider a neural network with one hidden layer of differential equation neurons and one linear output layer. For a target function $h(t)$, we take the Fourier transformation of $h(t)$, and find the set of present frequencies $\Omega$. For each present frequency $\omega_i \in \Omega$, we add two differential equation neuron $i_1$ and $i_2$ to the hidden layer with $c=1$, $b=0$ and $a=1/\omega_i^2$. Let $\beta_i$ and $\gamma_i$ be the corresponding coefficient of the sine and cosine functions from the Fourier transformation. Then, we let $c_{1i_1} = \gamma_i, c_{2i_1} = \beta_i$ and $c_{1i_2} = c_{2i_2} = 0 $, and corresponding weights $w_{i_1}$ and $w_{i_2}$ from neurons $i_1$ and $i_2$ to the output linear layer to be equal to $1.0$ and $-1.0$, respectively. This way, the two neurons will cancel the particular solution term of the function, and we'll have: $y_{i_1}(t)-y_{i_2}(t) = \beta_i \sin(\omega_i t) + \gamma_i \cos(\omega_i t)$. By construction, after adding all frequencies from the Fourier transformation of $h(t)$, the neural network will output $\sum_{i\in \Omega} \beta_i \sin(\omega_i t) + \gamma_i \cos(\omega_i t) = h(t)$.
\end{proof}
There are multiple other ways to prove this theorem. For example, since the Sigmoid, ReLU, and ReLU-like functions are among the solutions
of differential equations, we can directly apply results of the Stone-Weierstrass
approximation theorem \cite{de1959stone}.

\section{The Euler-Lagrange Equation in Calculus of Variations}
\label{calc_of_var}

As practiced in calculus of variations, a functional $\mathbf{J}$ is a function of functions, and it is usually expressed as a
definite integral over a mapping $L$ from the function of interest $y$ and its derivatives at point $t$ to a real number: $\mathbf{J}[y] =
\int_{t \in \mathcal{T}}L(t,y,y',y'',\ldots,y^{(n)})dt$. $y(t)$ is the function of interest, and $y',y'',\ldots,y^{(n)}$ are its derivatives up to the $n$th order. 

Intuitively, $L$ is a combining function, which reflects the cost (or the reward) of selecting the function $y$ at point $t$, and the integral sums up the overall costs for all $t \in \mathcal{T}$. Therefore, the functional $\mathbf{J}:(\mathbb{R}\rightarrow\mathbb{R})\rightarrow\mathbb{R}$ can represent the cost of choosing a specific input function. The minimum of the functional is a function $y^*(t) = \arg\min_y
\mathbf{J}(y)$ that incurs the least cost. Among other methods, $y^*(t)$ can be found by solving a
corresponding differential equation obtained by the Euler-Lagrange
equation~\cite{gel1963variatsionnoeischislenie,gelfand2000calculus}. In particular, the extrema to $\mathbf{J}[y] =
\int_{t_1}^{t_2} L(t,y,y',y'')dt$ are the same as the solutions of the following differential equation: 
\begin{align}
\frac{\partial L}{\partial y}-\frac{d}{dt}\frac{\partial L}{\partial y'}+\frac{d^2}{dt^2}\frac{\partial L}{\partial y''}=0
\end{align}

\eat{

\begin{proposition}\label{euler_lagrange_connection}
For any values of $a,b,c \in \mathbb{R}$, solving the following differential equation:
$a y''(t) + b y'(t) + c y(t) = u(t)$ is equivalent to extremizing a functional $\mathbf{J}[y] =
\int L(t,y,y',y'')dt$, where $L(t,y,y',y'') = \frac{1}{2} ([y(t),y'(t),y''(t)] \mathbf{A} [y(t),y'(t),y''(t)]^T + (y(t)-\zeta u(t))^2)$ with $A=A^T\in \mathbb{R}^{3\times 3}$ a symetric matrix and $\zeta\in \mathbb{R^+}$ a positive constant.
\end{proposition}
\begin{proof}
Let $\mathbf{A} = \begin{bmatrix}
A_1 & A_2 & A_3\\
A_2 & A_4 & 0\\
A_3 & 0 & 0\\
\end{bmatrix}$  By expansion, $L(t,y,y',y'')= \frac{1}{2}A_1 y^2 + A_2 yy' + A_3 yy'' + \frac{1}{2}A_4 y'^2 + \frac{1}{2} y^2 -  \zeta u(t) y + \frac{1}{2}\zeta^2 u(t)^2$.
We have:
\begin{align}
\frac{\partial L}{\partial y} = (A_1+1) y + A_2 y' + A_3 y'' - \zeta u(t) \nonumber\\
\frac{\partial L}{\partial y'} = A_2 y+ A_4 y' \Rightarrow \frac{d}{dt}\frac{\partial L}{\partial y'}= A_2 y'+ A_4 y''\nonumber \\
\frac{\partial L}{\partial y''} = A_3 y \Rightarrow \frac{d^2}{dt^2}\frac{\partial L}{\partial y''}= A_3 y'' \nonumber
\end{align}
\begin{align}
\Rightarrow \frac{\partial L}{\partial y} - \frac{d}{dt}\frac{\partial L}{\partial y'} + \frac{d^2}{dt^2}\frac{\partial L}{\partial y''} = 
 = (A_1+1) y + A_2 y' + A_3 y'' - \zeta u(t) \nonumber\\
= A_2 y'+ A_4 y''\nonumber \\
= A_3 y'' \nonumber
\end{align}

\end{proof}}
\section{Outward Gravitation, Finding $\tilde{c}_1$ and $\tilde{c}_2$}

Let $y(t; a, b, c, c_1, c_2)$ be the current activation function solution, and $\tilde{a}$,  $\tilde{b}$,  $\tilde{c}$ be values outside the singularity subspace. We need to select $\tilde{c}_1$ and $\tilde{c}_2$ such that:

\begin{align} \label{eq_initial_val}
    & \tilde{y}(t^*; \tilde{a},  \tilde{b},  \tilde{c},  \tilde{c}_1,  \tilde{c}_2) = y(t^*; a, b, c, c_1, c_2)\nonumber\\ 
    & \frac{\partial  \tilde{y}(t; \tilde{a},  \tilde{b},  \tilde{c},  \tilde{c}_1,  \tilde{c}_2)}{\partial t}\bigg\rvert_{t=t^*} = \frac{\partial  y(t; a, b, c, c_1, c_2)}{\partial t}\bigg\rvert_{t=t^*}\nonumber
\end{align}

Since $y = f(t;a,b,c) + c_1 f_1(t;a,b,c)+ c_2 f_2(t;a,b,c)$ and $\tilde{y} = \tilde{f}(t;\tilde{a},\tilde{b},\tilde{c}) + \tilde{c_1} \tilde{f}_1(t;\tilde{a},\tilde{b},\tilde{c})+ \tilde{c_2} \tilde{f}_2(t;\tilde{a},\tilde{b},\tilde{c})$, we will need:

\begin{align} 
    & \tilde{f}(t^*;\tilde{a},\tilde{b},\tilde{c}) + \tilde{c_1} \tilde{f}_1(t^*;\tilde{a},\tilde{b},\tilde{c})+ \tilde{c_2} \tilde{f}_2(t^*;\tilde{a},\tilde{b},\tilde{c}) = \nonumber\\
    & \quad \quad f(t^*;a,b,c) + c_1 f_1(t^*;a,b,c)+  c_2 f_2(t^*;a,b,c)\nonumber \\
    & \frac{\partial  \tilde{f}(t;\tilde{a},\tilde{b},\tilde{c})}{\partial t}\bigg\rvert_{t=t^*} + \tilde{c_1} \frac{\partial  \tilde{f}_1(t;\tilde{a},\tilde{b},\tilde{c})}{\partial t}\bigg\rvert_{t=t^*} 
    + \tilde{c_2} \frac{\partial  \tilde{f}_2(t;\tilde{a},\tilde{b},\tilde{c})}{\partial t}\bigg\rvert_{t=t^*} = \nonumber\\
    & \quad \quad \frac{\partial  f(t;a,b,c)}{\partial t}\bigg\rvert_{t=t^*} + c_1 \frac{\partial  f_1(t;a,b,c)}{\partial t}\bigg\rvert_{t=t^*} + c_2 \frac{\partial  f_2(t;a,b,c)}{\partial t}\bigg\rvert_{t=t^*}\nonumber
\end{align}

Let $a_{11} = \tilde{f}_1(t^*;\tilde{a},\tilde{b},\tilde{c})$, $a_{12} = \tilde{f}_2(t^*;\tilde{a},\tilde{b},\tilde{c})$, $a_{21} = \frac{\partial  \tilde{f}_1(t;\tilde{a},\tilde{b},\tilde{c})}{\partial t}\bigg\rvert_{t=t^*}$, $a_{22}=\frac{\partial  \tilde{f}_2(t;\tilde{a},\tilde{b},\tilde{c})}{\partial t}\bigg\rvert_{t=t^*}$, $b_{11} = f(t^*;a,b,c) + c_1 f_1(t^*;a,b,c)+ c_2 f_2(t^*;a,b,c) - \tilde{f}(t^*;\tilde{a},\tilde{b},\tilde{c})$, $b_{12} = \frac{\partial  f(t;a,b,c)}{\partial t}\bigg\rvert_{t=t^*} + c_1 \frac{\partial  f_1(t;a,b,c)}{\partial t}\bigg\rvert_{t=t^*} + c_2 \frac{\partial  f_2(t;a,b,c)}{\partial t}\bigg\rvert_{t=t^*} - \frac{\partial  \tilde{f}(t;\tilde{a},\tilde{b},\tilde{c})}{\partial t}\bigg\rvert_{t=t^*}$, $A = \begin{bmatrix}a_{11}&a_{12}\\a_{21}&a_{22}\end{bmatrix}$, $B = \begin{bmatrix}b_{11} \\ b_{12}\end{bmatrix}$, then by substituting $t^*$, $a$, $b$, $c$, $c_1$, $c_2$ $\tilde{a}$,  $\tilde{b}$ and  $\tilde{c}$, we'll have: $\begin{bmatrix}\tilde{c}_{1} \\ \tilde{c}_{2}\end{bmatrix}= (A^TA + \lambda I)^{-1} A^T B$, where $\lambda=1e-9$ and $I$ is the two by two identity matrix. ($\lambda I$ is added in case $A^TA $ is ill-conditioned.)

\section{Example Spectra of Possible Activation Functions}\label{app:extra}
\begin{figure}[!tbh]
\centering 
\subfigure[ ]{\includegraphics[width=0.32\linewidth]{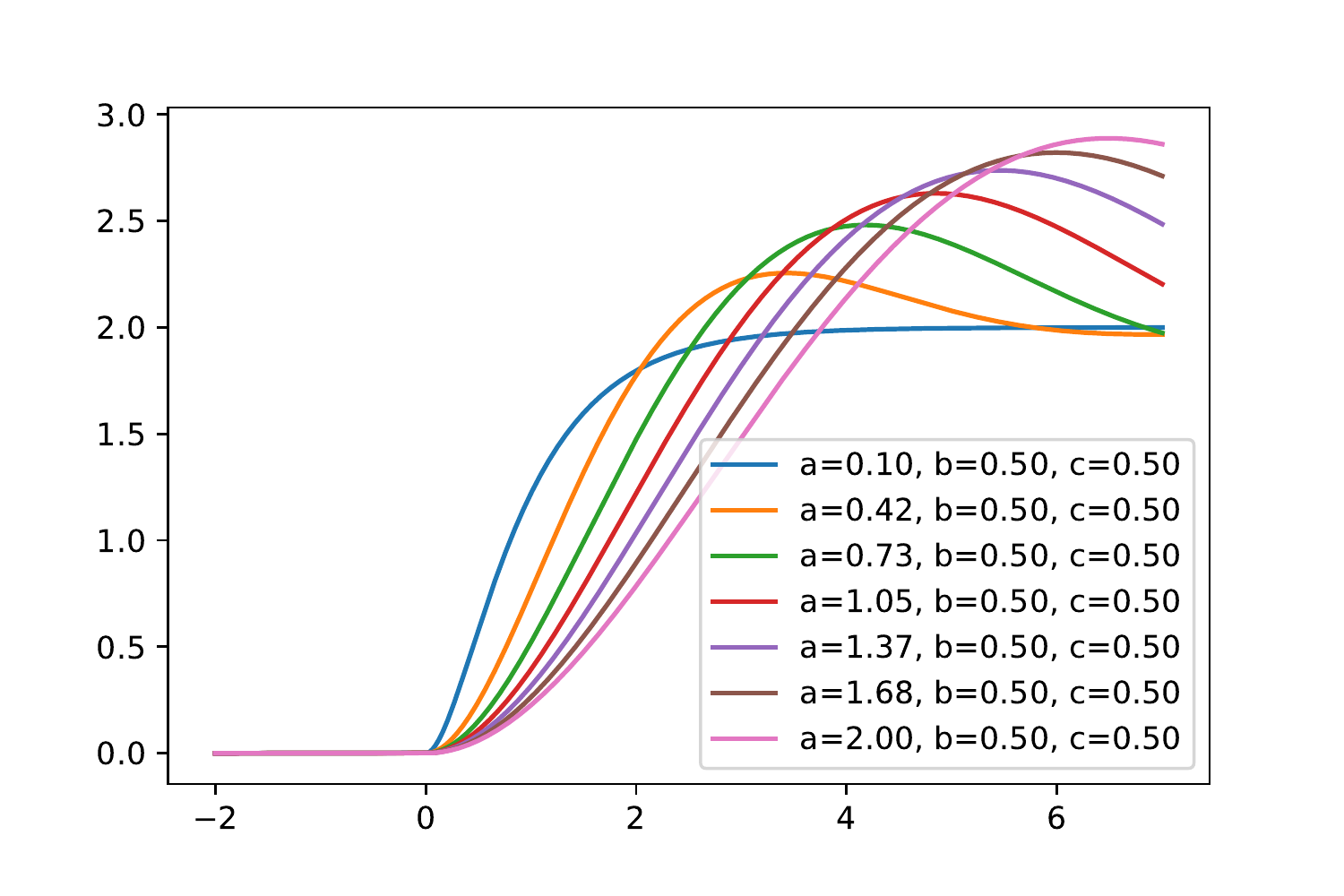}}\label{fig:var_a}
\subfigure[ ]{\includegraphics[width=0.32\linewidth]{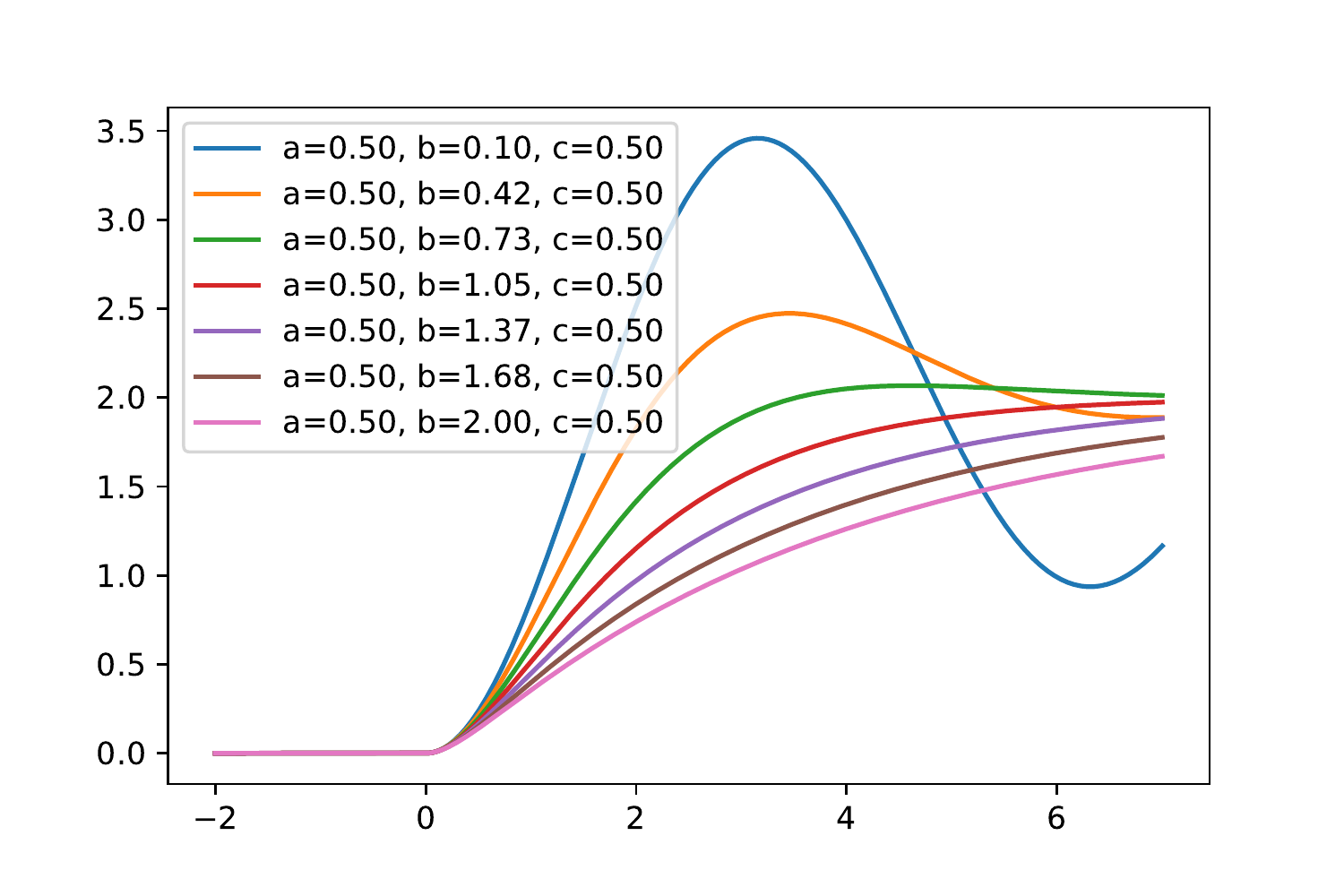}}\label{fig:var_b}
\subfigure[ ]{\includegraphics[width=0.32\linewidth]{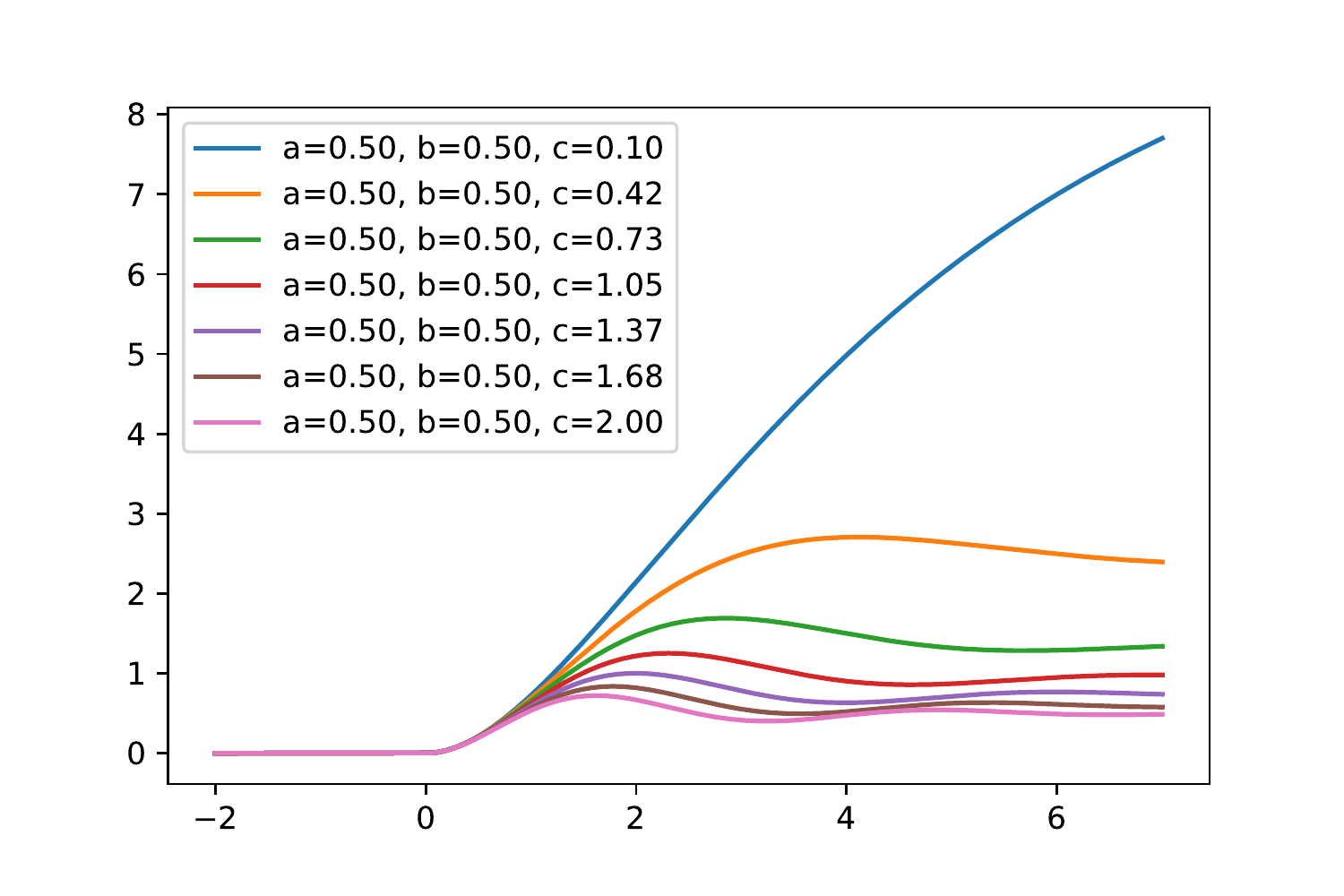}}\label{fig:var_c}
\caption{The spectra of functions generated by varying one of $a,b,c$, and
 fixing the other two with $c_1=c_2=0$.}
 \label{fig:var_coef}
\end{figure}

Figure~\ref{fig:var_coef} shows how changing a coefficient in the low
dimensional differential equation space representation will
affect the resulting functional on the manifold.

\section{Toy Datasets}\label{app:reg}
We explore the behavior of a DEU neuron on sine function toy data by training it numerous times with different initialization, each time the differential transformed itself to have a sine-like solution. This supports that DEU activation functions transform its functional form during the training process. We also demonstrated the capability of DEU to learning complex concepts, and with a significantly reduced network size.

In these tests we compared fixed activation networks using 
ReLU, LeakyReLU and SELU activations to a
significantly smaller neural network with DEUs. 

We trained on two periods of a sine function and extrapolated half of one period of unseen values (expect for DEU that we test for one full period.) As seen in Figure~\ref{sine_fcn}, a single DEU (initialized with ReLU) can learn the sine function almost perfectly, while the fixed activation baseline networks fail to fit comparable to the training data, even with a 10x larger network.


\begin{figure}[tbh]
\centering

\begin{tabular}{c c}
\includegraphics[width=0.4\linewidth]{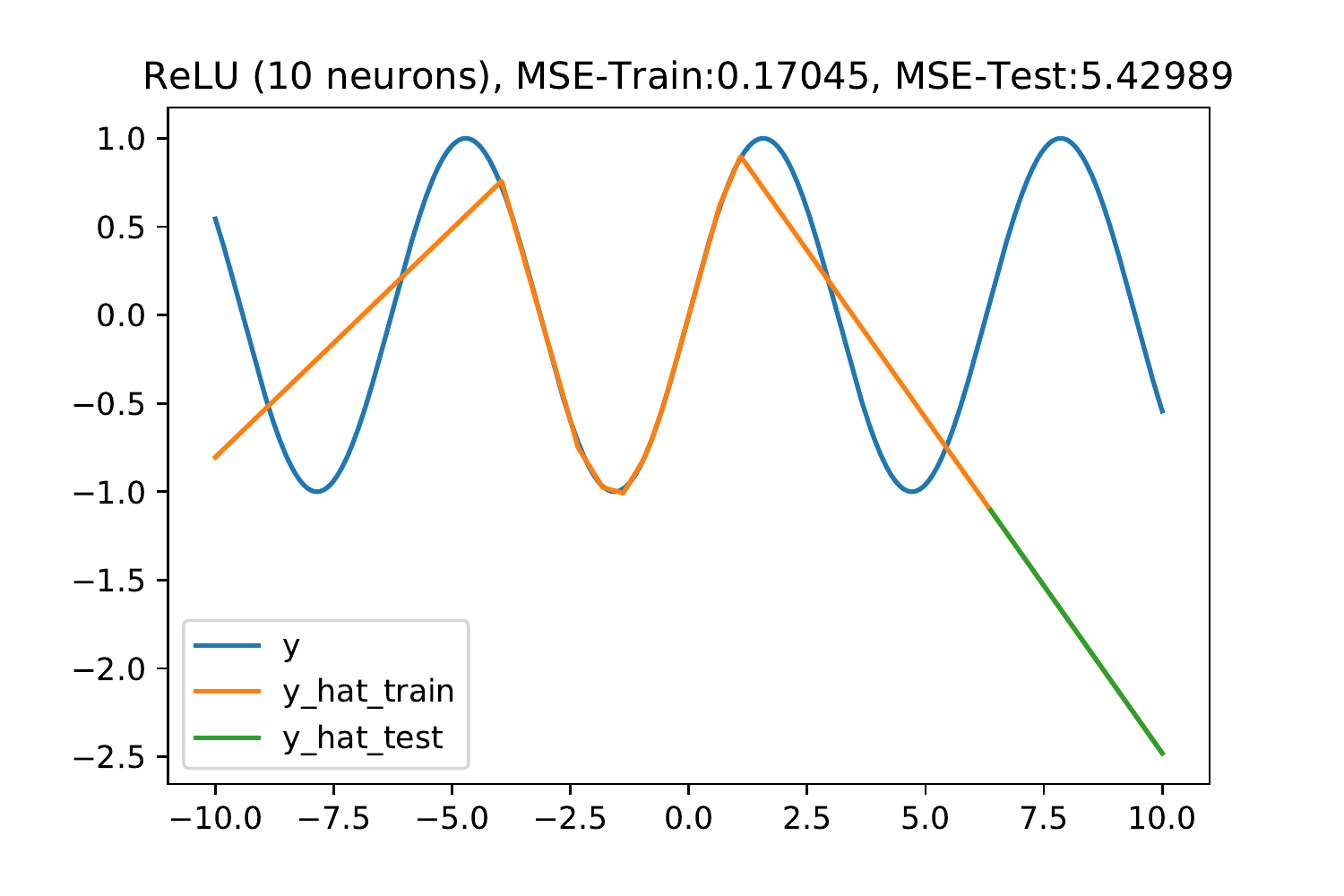}&
\includegraphics[width=0.4\linewidth]{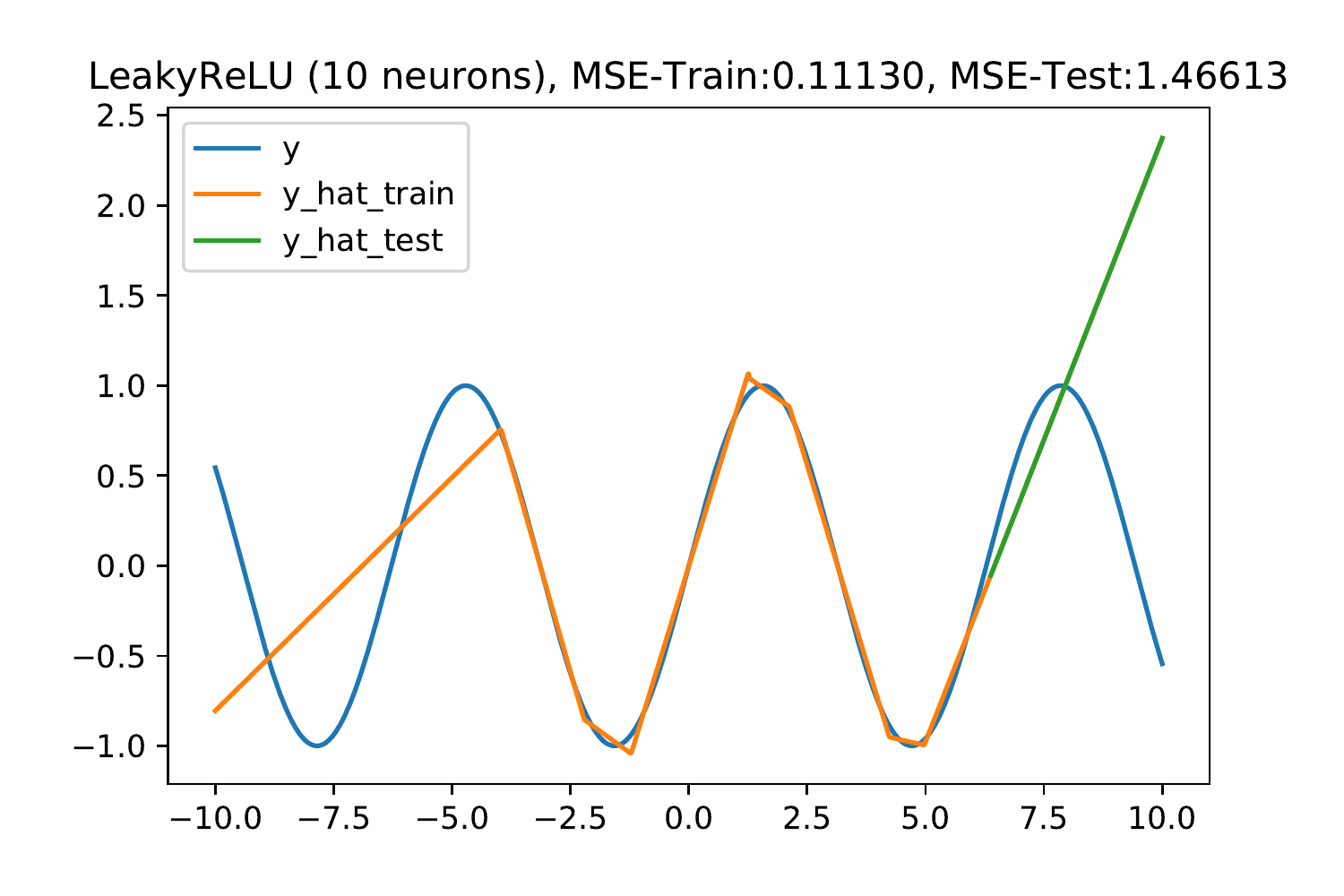}\\
\includegraphics[width=0.4\linewidth]{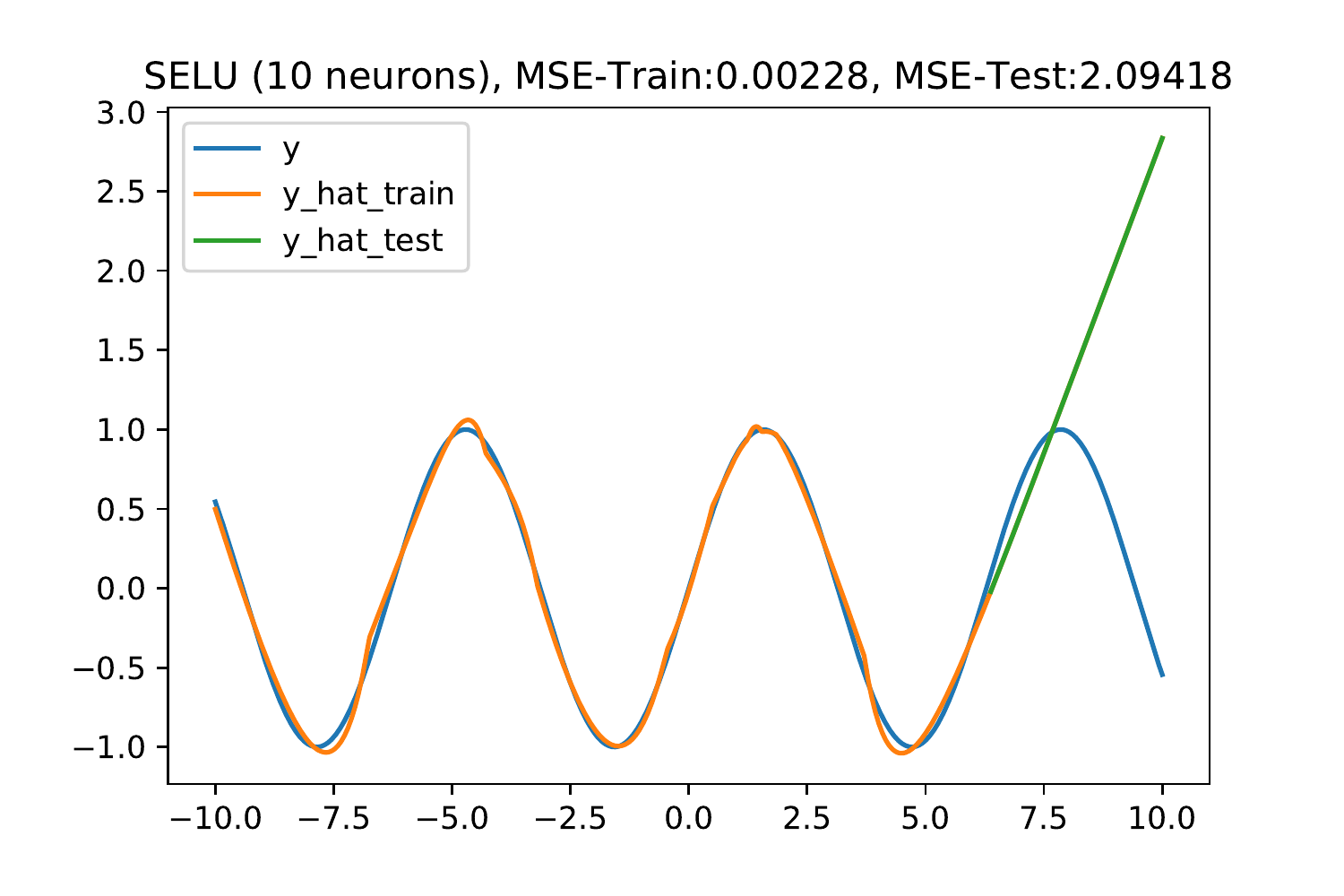}&
\includegraphics[width=0.4\linewidth]{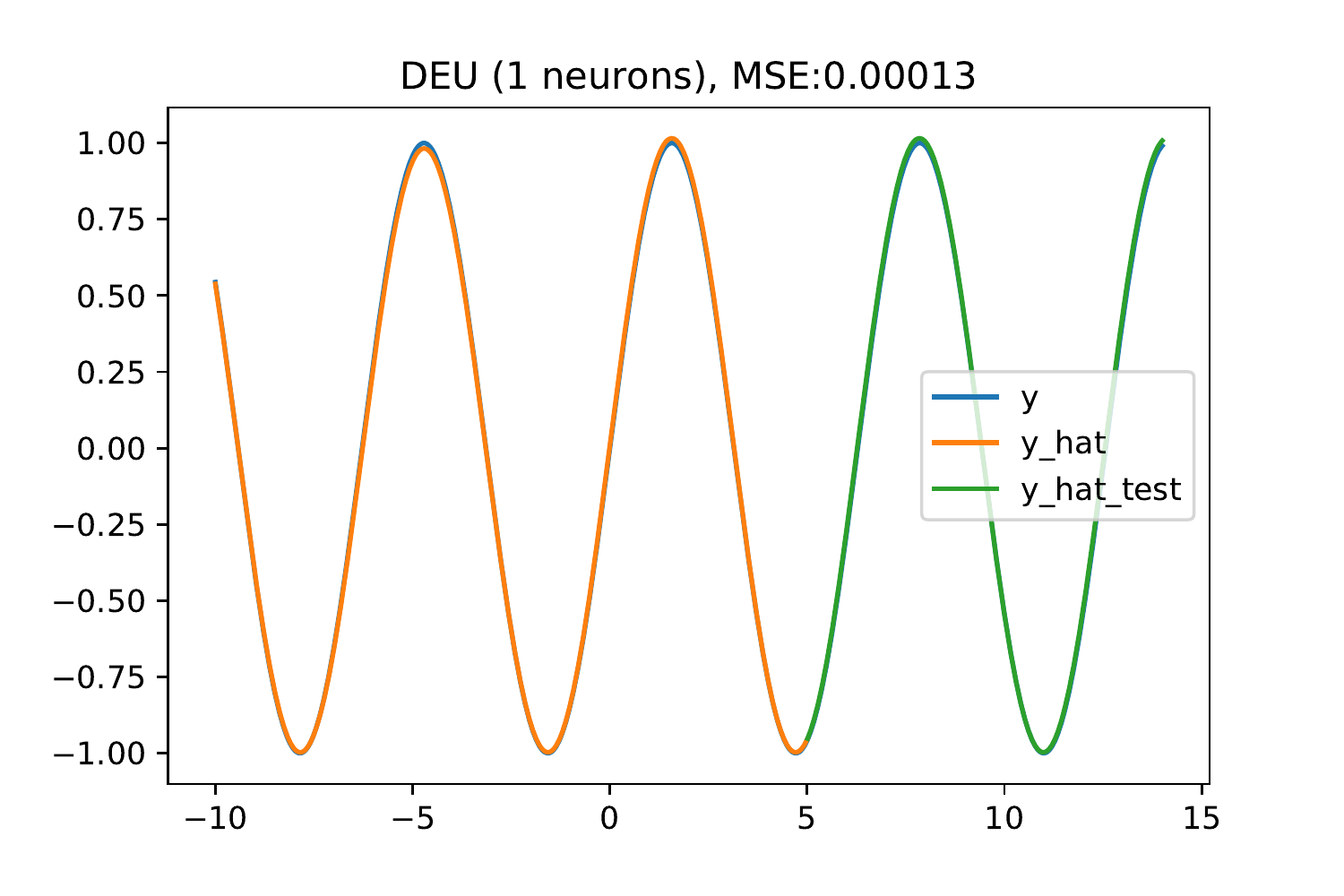}\\

\end{tabular}
\caption{Learning the sine function. The differential equation neuron learns the function significantly better than larger baselines. Also, the DEU was initialized with ReLU. It can easily transform itself to an oscillating form.
\label{sine_fcn}
}
\end{figure}


\clearpage





\end{document}